\NewDocumentCommand{\longdash}{ O{2} }
 {
  --\prg_replicate:nn { #1 - 1 } { \negthinspace -- }
 }
 \newtheorem{theorem}{Theorem}
\newtheorem{proposition}[theorem]{Proposition}
\newtheorem{definition}{Definition}
\theoremstyle{definition}
\theoremstyle{definition}
\theoremstyle{definition}
\theoremstyle{definition}
\theoremstyle{definition}
\DeclareMathOperator*{\argmin}{arg\,min\,}
\DeclareMathOperator*{\argmax}{arg\,max\,}
\DeclareMathOperator*{\Id}{id}
\newcommand{\R}{\ensuremath{\mathbb{R}}}
\newcommand{\nobs}{n}
\newcommand{\nvar}{d}
\newcommand{\ndim}{\nvar}
\newcommand{\x}{x}
\newcommand{\xvec}{\bm{\x}}
\newcommand{\xmat}{X}
\newcommand{\xset}{\mathcal{\xmat}}
\newcommand{\y}{y}
\newcommand{\yvec}{\bm{\y}}
\newcommand{\z}{z}
\newcommand{\zvec}{\bm{\z}}
\renewcommand{\u}{u}
\newcommand{\uvec}{\bm{\u}}
\renewcommand{\v}{v}
\newcommand{\vvec}{\bm{\v}}
\newcommand{\KL}{\textnormal{KL}}
\newcommand{\T}{\text{T}} 
 \newcommand{\E}{\mathbb{E}}
\newcommand{\gvec}{\bm{g}}
\newcommand{\dist}{d}
\renewcommand{\dist}{d}
\newcommand{\bary}{\textnormal{bary}}
\renewcommand{\T}{T}
\newcommand{\Tw}{t}
\newcommand{\wrapproof}[1]{
\iftrue #1
\fi }
\newcommand{\weight}{w}
\newcommand{\weightvec}{\bm{\weight}}
\renewcommand{\dist}{d}
\newcommand{\dir}{k}
\newcommand{\ndir}{K}
\newcommand{\class}{m}
\newcommand{\nclass}{M}
\newcommand{\projection}{\bm{\theta}}
\newcommand{\pbary}{P_{X_{\textnormal{bary}}}}
\newcommand{\nlayer}{L}
\newcommand{\divergence}{\phi}
\newcommand{\SW}{\textnormal{SW}}
\newcommand{\maxSW}{\textnormal{max-SW}}
\newcommand{\maxkSW}{\textnormal{max-K-SW}}
\newcommand{\multiW}{\textnormal{Multi-W}}
\newcommand{\multimaxkSW}{\textnormal{Multi-max-K-SW}}
\newcommand{\pmm}[1]{ $\pm $ }
\newcommand{\cmark}{\ding{51}}\newcommand{\xmark}{\ding{55}}
\begin{document}

\twocolumn[

\aistatstitle{Iterative Alignment Flows}

\aistatsauthor{ Zeyu Zhou \And Ziyu Gong \And Pradeep Ravikumar \And David I. Inouye }

\aistatsaddress{ Purdue University \And  Purdue University \And Carnegie Mellon University \And Purdue University } ]

\begin{abstract}
    The unsupervised task of aligning two or more distributions in a shared latent space has many applications including fair representations, batch effect mitigation, and unsupervised domain adaptation.
    Existing flow-based approaches estimate multiple flows independently, which is equivalent to learning multiple full generative models.
    Other approaches require adversarial learning, which can be computationally expensive and challenging to optimize.
    Thus, we aim to jointly align multiple distributions while avoiding adversarial learning.
    Inspired by efficient alignment algorithms from optimal transport (OT) theory for univariate distributions, we develop a simple iterative method to build deep and expressive flows.
    Our method decouples each iteration into two subproblems: 
    1) form a variational approximation of a distribution divergence and 2) minimize this variational approximation via closed-form invertible alignment maps based on known OT results.
    Our empirical results give evidence that this iterative algorithm achieves competitive distribution alignment at low computational cost while being able to naturally handle more than two distributions.
\end{abstract}

\section{INTRODUCTION}

The task of aligning two or more distributions in a shared latent space without any pairing information between data points (i.e., unsupervised) has attracted increasing interest due to its varied applications. These include fair representations \citep{zemel2013learningfair}, batch effect mitigation \citep{haghverdi2018batch}, unsupervised domain adaptation \citep{hu2018duplex}, and generative models \citep{grover2020alignflow}.
For example, \citet{zemel2013learningfair} estimate a shared latent representation of the class-conditional distributions that simultaneously obfuscates any information about protected attributes (e.g., race) while preserving all other information useful for classification.
For genetic data, \citet{haghverdi2018batch} attempt to mitigate batch effects (i.e., irrelevant shifts in the data between batches caused by non-biological factors) by estimating a shared representation among batches;
this enables the integration and analysis of multiple datasets collected at different laboratories.
\citet{hu2018duplex} perform unsupervised domain adaptation by mapping the source and target domains to a shared latent representation.

\begin{table*}[t!]
\caption{Comparison with other methods. *AlignFlow relies on adversarial learning to get good results.} \label{sample-table}
\label{tab:comparison}
\begin{center}
 \resizebox{1.5\columnwidth}{!}{
\begin{tabular}{llllll}
  & \textbf{CycleGAN}& \textbf{AlignFlow}&\textbf{LRMF}&\textbf{SINF}&\textbf{INB} \\ 
\hline \hline \\
\textbf{Distribution alignment}           & \cmark &\cmark&\cmark&\xmark&\cmark\\
\textbf{No adversarial learning}    & \xmark & \xmark* &\cmark&\cmark &\cmark\\
\textbf{Iterative learning}  & \xmark & \xmark &\xmark &\cmark&\cmark\\
\textbf{Multiple distributions}  & \xmark & \xmark &\xmark &\xmark&\cmark\\
\textbf{Shared latent space}  & \xmark & \cmark &\xmark&\xmark&\cmark\\
\hline
\end{tabular}
}
\end{center}
\end{table*}

Prior work on this unsupervised alignment task generally falls into two categories: adversarial and flow-based methods.
\citet{CycleGAN2017} propose CycleGAN for domain translation via Generative Adversarial Networks (GANs) \citep{Goodfellow2014}. Specifically, they jointly train two GANs that attempt to generate one dataset from the other dataset and  add a cycle consistency loss to encourage that translating from domain A to B and back to A will yield the original point---i.e., an approximate invertibility constraint.
\citet{grover2020alignflow} propose AlignFlow that uses normalizing flows \citep{DBLP:conf/icml/RezendeM15,DBLP:journals/corr/DinhKB14} to satisfy cycle consistency by construction and, in contrast to CycleGAN, learns a shared latent representation of the two datasets.
AlignFlow combines both adversarial learning and maximum likelihood estimation (MLE) for training. 
Both CycleGAN and AlignFlow leverage adversarial learning to achieve good results.
However, a fundamental limitation of adversarial learning is that it can be computationally expensive and challenging to optimize \citep[e.g.,][]{lucic2018gans,kurach2019the}.
To avoid adversarial learning, AlignFlow can be set to only use the MLE loss terms.
In this case, the two flow models are estimated independently and they use Gaussian distribution as their latent representation which does not preserve any shared structure (e.g., black pixels in MNIST digits).
Thus, without adversarial learning, a natural consequence is that AlignFlow must essentially estimate two full generative models rather than merely estimating the translation map---which will be simpler if the datasets share some structure.
Hence, their method is likely to require higher sample complexity and computational cost. 
Another flow-based method LRMF \citep{lrmf} directly learns the transformation between two distributions via minimizing the non-adversarial log-likelihood ratio. 
However it is limited to the alignment between two distributions and does not learn a shared latent space.

Inspired by the limitation  of existing methods, we aim at the \emph{joint} estimation of multiple flow models that map to a shared representation \emph{without} adversarial learning. 
While in general this is hard for complex datasets, simple cases can be solved very efficiently using the tools from optimal transport (OT) theory.
Specifically, for 1D distributions, it is easy to compute the invertible maps between each distribution and the barycenter distribution, which naturally serves as a shared latent space.
Thus, we propose a method we call Iterative Na\"ive Barycenter (INB),  
which instead of trying to solve a large global problem directly, iteratively solves simpler subproblems that first estimate a variational divergence and then minimize this variational divergence via OT barycenter maps by leveraging known efficient solutions.
We leverage the development of the maximum K-Sliced Wasserstein distance proposed in Sliced Iterative Normalizing Flows (SINF) \citep{DBLP:conf/icml/DaiS21}.

As we show in the experiments, our INB method can achieve competitive or better alignment performance than baselines within a much shorter time.
Moreover, INB naturally works with multiple distributions in a symmetric way which significantly reduces the  computational cost and improves the alignment performance.

For clarity, we compare INB with prior methods in \autoref{tab:comparison} and summarize our contributions as follows:
\begin{itemize}

    \item We first develop a symmetric Monge map problem and a multi-distribution divergence to enable multi-distribution alignment.We show that the symmetric Monge map problem is equivalent to finding the Monge maps to the barycenter distribution and can be solved in closed-form for 1D distributions.
    
    \item We propose an efficient iterative algorithm for unsupervised distribution alignment by iteratively minimizing the multi-distribution divergence. 
    Our algorithm involves two steps: the first step forms a variational approximation of the divergence around the current iterate and the second step exactly minimizes this variational divergence via known OT solutions for 1D.
    
    \item To the best of our knowledge, our INB approach is the first flow-based distribution alignment approach that can be naturally applied to align multiple distributions.
    \item We demonstrate the benefits of our INB approach on synthetic and real-world datasets.
\end{itemize}

\section{BACKGROUND}
\label{sec:background}

Given samples from $\nclass$ class distributions $(P_{X_1}, P_{X_2}, \cdots, P_{X_\nclass})$, our goal is to find invertible maps $T_1,T_2,\cdots,T_\nclass$  such that the resulting latent distributions are aligned in a shared latent space, i.e., $P_{T_1(X_1)} = P_{T_2(X_2)} = \cdots = P_{T_\nclass(X_\nclass)}$.
Because the maps are invertible, this also enables translation between any two component distributions merely by composing one map and the inverse of the other, i.e., to translate from $\class$ to $\class'$, the following map can be used $T_{\class\to \class'} = T_{\class'}^{-1} \circ T_\class$.
We can formalize our alignment goal as the following optimization problem:
\begin{align}
    \min_{\T_1,\cdots,\T_\nclass} \divergence(P_{T_1(X_1)}, P_{T_2(X_2)},\cdots,P_{T_\nclass(X_\nclass)}) \,,
\end{align}
where $\divergence$ is a multi-distribution statistical divergence (i.e., a functional that is always non-negative and zero if and only if all distributions are equal).
To solve this problem, we need a tractable divergence $\divergence$ and a tractable method for optimizing this problem.
We first review key concepts from optimal transport (OT) that will be needed for deriving our iterative algorithm, particularly closed-form OT solutions to 1D problems.
Then, we will review tractable two-distribution divergences, which we will extend to the $\nclass$ distribution case in later sections.

\subsection{Optimal Transport Fundamentals}

We will first review some standard OT definitions. The following classical Monge map problem \citep[Remark~2.7]{peyre2019computational} can be seen as finding the lowest transportation cost map that aligns the distributions (which is an explicit constraint in the problem).
\begin{definition}[Monge problem]\label{def:monge-problem}
Given two distributions $(P_{X_1}, P_{X_2})$ supported on two spaces $(\xset_1,\xset_2)$ and a cost function $c(\cdot, \cdot)$, the Monge problem is defined as finding the map $T \colon \xset_1 \to \xset_2$ that solves:
$    \arg\min_{T} \E_{P_{X_1}}[c(x, T(x))] \;\; \textnormal{s.t.} \quad P_{T(X_1)} = P_{X_2} \, ,$
where the objective is the \emph{transportation cost} and the constraint is a distribution alignment condition (also known as the \emph{pushforward condition}).
\end{definition}

We next review the definitions of the Kantorovich relaxation \citep[Remark~2.13]{peyre2019computational} and the barycenter distribution \citep[Remark 9.1]{peyre2019computational}, which will be important for our development of multi-distribution divergences.
For this paper, we will assume $c(x,y) = \|x-y\|_2^2$ and that one of the distributions has a density so that the barycenter is unique \citep{agueh2011barycenters}.

\begin{definition}[Kantorovich Relaxation]
\label{def:kantorovich-relaxation}
Given the same variables as \autoref{def:monge-problem}, the Kantorovich problem is defined as:
$    \mathcal{L}_c(P_{X_1}, P_{X_2}) \triangleq \min_{Q \in \mathcal{U}(P_{X_1}, P_{X_2})} \E_{Q}[c(x_1, x_2)] \, ,$
where $Q$ is a joint distribution over $\xset_1$ and $\xset_2$ such that the marginals are equal to $P_{X_1}$ and $P_{X_2}$ respectively (denoted by $\mathcal{U}(P_{X_1}, P_{X_2})$).
\end{definition}

\begin{definition}[Wasserstein Barycenter]
\label{def:wasserstein-barycenter}
Given a set of input distributions $(P_{X_1}, \cdots, P_{X_\nclass})$ defined on some space $\xset$, weights $\weightvec$ such that $\sum_\class \weight_\class = 1$, the barycenter is defined as:
$    \bary(P_{X_1}, P_{X_2}, \cdots, P_{X_\nclass}; \weightvec) \triangleq \argmin_{P_{X_\bary}} \sum_{\class=1}^\nclass \weight_\class \mathcal{L}_c(P_{X_\bary}, P_{X_\class}) \, ,$
where $\mathcal{L}_c$ is defined in \autoref{def:kantorovich-relaxation}.
\end{definition}

Finally, we review the Wasserstein-2 distance between distributions that will be the basis for the tractable sliced Wasserstein distance described next.
\begin{definition}[Wasserstein-2 Distance]
The Wasserstein-2 distance is simply $W_2(P_{X_1},P_{X_2}) = \mathcal{L}_c(P_{X_1},P_{X_2})^{\frac{1}{2}}$, where $\mathcal{L}_c$ is defined as above and $c(x,y) = \|x-y\|_2^2$.
\end{definition}

\subsection{Maximum K-sliced Wasserstein Distance}
While in  general the Wasserstein-2 distance requires solving a complex optimization problem, in 1D, the distance can be computed in closed-form because the Monge map is known in closed-form.
Thus, several works \citep[e.g.,][]{DBLP:journals/jmiv/BonneelRPP15, DBLP:conf/cvpr/KolouriZR16, DBLP:conf/cvpr/DeshpandeZS18} propose to use the sliced Wasserstein distance defined as: $\SW(P_{X_1},P_{X_2}) \triangleq \E_{\theta}[ W_2(P_{\theta^T {X_1}},P_{\theta^T {X_2}})]$ where $\theta$ is distributed as a uniform distribution over all unit norm vectors.
A variant called the maximum sliced Wasserstein distance has also been proposed $\maxSW(P_{X_1}, P_{X_2}) \triangleq \max_{\theta} W_2(P_{\theta^T {X_1}}, P_{\theta^T {X_2}})$, which is computed along the direciton with the largest $W_2$ distance \citep{DBLP:conf/nips/KolouriNSBR19}.
Recently, \citet{DBLP:conf/icml/DaiS21} proposed the maximum K-sliced Wasserstein distance (which they prove is a true metric between distributions) that finds the $K$ orthogonal directions that maximize the $W_2$ distance along each projection, i.e., $\maxkSW(P_{X_1}, P_{X_2})\triangleq \max_{\theta_1, \dots, \theta_\ndir} \sum_{\dir=1}^\ndir W_2(P_{\theta_\dir^T {X_1}}, P_{\theta_\dir^T {X_2}})$ such that $\theta_\dir^T \theta_{\dir'} = 0, \forall \dir \neq \dir'$ and $\|\theta_\dir\|_2=1$.

\section{MULTIPLE DISTRIBUTION ALIGNMENT}
To handle multi-distribution alignment, we first define a symmetric Monge map problem and show that the solution is related to the barycenter problem.
This new multi-distribution problem suggests a natural multi-distribution extension to the maximum K-sliced Wasserstein distance, which will be the divergence we seek to minimize in our iterative algorithm.

\subsection{Symmetric Monge Map Formulation}

The original Monge formulation is asymmetric because the two distributions have distinct roles.
While in theory the role of the distributions does not matter because $T^*_{\class'\to \class} \equiv (T^{*}_{\class\to \class'})^{-1}$, in practice the estimated map $\hat{T}$ will vary depending on which distribution is the source distribution.
Finally and more importantly, the Monge problem in its original formulation only considers two distributions but we want to consider more than two distributions.

\begin{definition}[Symmetric Monge Map (SMM)]
\label{def:symmetric-monge}
Given a set of continuous input distributions $(P_{X_1}, \dots, P_{X_\nclass})$ defined on some continuous space $\xset$, a non-negative weight vector $\weightvec \geq 0$ such that $\sum_\class \weight_\class = 1$, and cost function $c(\cdot, \cdot)$, the \emph{symmetric Monge map problem} is defined as:
\begin{align}
\begin{aligned}
    \argmin_{T_1, T_2, \cdots, T_\nclass}  \quad \sum_{\class=1}^\nclass \weight_\class \E_{P_{X_\class}} \left[ c(x,T_\class(x))\right]\\
    \textnormal{s.t.} \quad 
    P_{T_\class(X_\class)} = P_{T_{\class'}(X_{\class'})} \,\,\, \forall \class \neq \class'  \, .
    \end{aligned}
\end{align}
\end{definition}

When $M$=2, the original Monge problem can be recovered if $T_2 = \Id$ and $\weight_2=0$.  Thus, this problem can be seen as a symmetric relaxation of the Monge problem for two or more distributions.
We prove that our symmetric Monge map problem is equivalent to finding the maps to the barycenter (proof in appendix).
\begin{theorem}[SMM Solution is Monge Maps To Barycenter]
\label{thm:barycenter-equivalence}
For $c(x,y) = \|x-y\|_2^2$ where the distributions have densities, the symmetric Monge map solution (\autoref{def:symmetric-monge}) is the Monge maps between the class distributions and the barycenter distribution (\autoref{def:wasserstein-barycenter}), i.e., $T^*_\class = T^*_{\class \to \bary}$ where $P_{X_{\bary}} = \bary (P_{X_1}, P_{X_2}, \dots, P_{X_\nclass}; \weightvec)$.
\end{theorem}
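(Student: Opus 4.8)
The plan is to decouple the joint minimization in \autoref{def:symmetric-monge} into an outer minimization over the common pushforward target and an inner family of classical Monge problems, and then to recognize the resulting outer problem as exactly the barycenter objective of \autoref{def:wasserstein-barycenter}.

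First I would introduce the common target explicitly. By the pushforward condition \autoref{eqn:pushforward-condition}, any feasible tuple $(T_1,\dots,T_k)$ determines a single measure $\nu$ with $T_{j\sharp}\mu_j=\nu$ for every $j$; conversely, for a fixed $\nu$, a tuple is feasible with that common pushforward if and only if each $T_j$ satisfies $T_{j\sharp}\mu_j=\nu$. Since \autoref{eqn:transportation-cost} is additively separable across $j$, the symmetric Monge map problem therefore equals
\[
\min_{\nu}\ \sum_{j=1}^k \weight_j \left( \min_{T_j:\, T_{j\sharp}\mu_j=\nu} \int_{\xset} c(x, T_j(x))\, d\mu_j(x) \right),
\]
where the inner minimizations are independent. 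Here I would note that the feasible set is never empty: because each $\mu_j$ has a density and $c(x,y)=\|x-y\|_2^2$, Brenier's theorem \citep{peyre2019computational} gives a unique optimal transport map from $\mu_j$ to any target $\nu$ with finite second moment, so the reduction loses nothing.

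Second I would identify the inner problem. For fixed $j$ and $\nu$, the inner minimization is precisely the Monge problem of \autoref{def:monge-problem} between $\alpha=\mu_j$ and $\beta=\nu$; by Brenier's theorem its unique minimizer is the OT map $T^*_{\mu_j\to\nu}$, and—crucially, because $\mu_j$ is absolutely continuous—its optimal value coincides with the Kantorovich value $\mathcal{L}_c(\mu_j,\nu)$ of \autoref{def:kantorovich-relaxation}. Substituting, the symmetric Monge map problem reduces to $\min_\nu \sum_{j=1}^k \weight_j\, \mathcal{L}_c(\mu_j,\nu)$, which is exactly the Wasserstein barycenter objective. Invoking \citep{agueh2011barycenters}, since at least one $\mu_j$ has a density, the barycenter $\nu^\ast=\bary(\mu_1,\dots,\mu_k;\weightvec)$ exists, is unique, and is absolutely continuous; hence the optimal common target is $\nu^\ast$ and the optimal maps are $T^\ast_j = T^\ast_{\mu_j\to\nu^\ast}=T^\ast_{j\to\nu^\ast}$, with absolute continuity of $\nu^\ast$ also yielding invertibility of these maps.

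The main obstacle is the measure-theoretic bookkeeping underlying the second step: one must carefully invoke existence and uniqueness of Brenier maps from an absolutely continuous source to an arbitrary (possibly singular) target, the equality of the Monge and Kantorovich optima under that absolute continuity assumption, and finiteness of second moments so that all quantities above are finite and the two nested minimizations may be legitimately interchanged. Once these standard OT facts are in place, the remainder is a direct rewriting of the objective.
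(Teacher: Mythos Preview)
Your proposal is correct and follows essentially the same route as the paper: introduce the common pushforward target $\nu$, observe that for fixed $\nu$ each inner problem is a classical Monge problem whose optimum equals the Kantorovich cost $\mathcal{L}_c(\mu_j,\nu)$ via Brenier's theorem, and then recognize the outer minimization as the barycenter objective of \autoref{def:wasserstein-barycenter}. The paper's proof is slightly terser (it uses a one-line contradiction to show $T_j^*=T^*_{j\to\nu}$ rather than explicitly writing the nested minimization), but the logical structure is the same, and your version is if anything more careful about the measure-theoretic hypotheses needed to invoke Brenier and \citep{agueh2011barycenters}.
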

An important special case where both the barycenter distribution and the Monge maps are known in closed-form is 1D distributions.
Thus, in combination with this theorem, we can solve the SMM problem in closed-form for 1D distributions in our iterative algorithm.

\subsection{Multiple Distribution Divergences}
Just as the Wasserstein distance can be directly derived from the optimum value of the Monge map problem, the optimal value of the SMM problem can provide a natural multi-distribution divergence.
\begin{definition}
\label{def:multi-distribution-wasserstein}
The multi-distribution Wasserstein divergence is defined as $\multiW(P_{X_1}, \cdots, P_{X_\nclass}) \triangleq \min_{T_1, T_2, \cdots, T_\nclass} \sum_{\class=1}^\nclass \weight_\class \E_{P_{X_\class}} \left[ c(x,T_\class(x))\right]$ such that $ P_{T_\class(X_\class)} = P_{T_{\class'}(X_{\class'})} \,\,\, \forall \class \neq \class'$. 
\end{definition}
Similarly, we can define the multi-distribution version of the maximum K-sliced Wasserstein, which we will use to develop our iterative algorithm in the next section.
\begin{definition}
\label{def:multi-distribution-max-k-sw}
The multi-distribution maximum K-sliced Wasserstein divergence is defined as $\multimaxkSW(P_{X_1}, \cdots, P_{X_\nclass}) \triangleq \max_{\theta_1, \dots, \theta_\ndir} \sum_{\dir=1}^\ndir \multiW_2(P_{\theta_\dir^T X_1}, \cdots, P_{\theta_\dir^T X_\nclass})$.
\end{definition}
The proof that these are divergences (i.e., that they are non-negative and have a value of 0 if and only if the distributions are equal) follows easily from the solutions to the SMM problem (see appendix for details).

\section{ITERATIVE DISTRIBUTION ALIGNMENT}\label{sec:align-algorithm}
As a reminder, our ultimate alignment goal is to solve the following problem:
\begin{align}
    \min_{\T_1,\cdots,\T_\nclass} \divergence(P_{T_1(X_1)}, P_{T_2(X_2)},\cdots,P_{T_\nclass(X_\nclass)}) \,,
    \label{eqn:divergence-minimization}
\end{align}
where $\divergence$ is a multi-distribution divergence.
In general a multi-distribution divergence is challenging to even approximate, thus we turn to the sliced Wasserstein versions which are tractable to estimate even for empirical distributions.
In particular, the multi-distribution max-K-SW can be written as a maximization problem over a variational approximation of the divergence denoted by $\tilde{\divergence}$ and parameterized by $\theta = (\theta_1, \cdots, \theta_\ndir)$, i.e., 
\begin{align}
    \divergence(P_{X_1}, \cdots,P_{X_\nclass}) &= \max_{\theta} \tilde{\divergence}(\theta, P_{X_1}, \cdots,P_{X_\nclass})
    \label{eqn:variational-divergence}
\end{align}
where
\begin{align*}
    \tilde{\divergence}(\theta, P_{X_1}, \cdots) &\triangleq \sum_{\dir=1}^\ndir \multiW_2(P_{\theta_\dir^T X_1}, \cdots, P_{\theta_\dir^T X_\nclass}) \,.
\end{align*}
Importantly, note that $\multiW_2(P_{\theta_\dir^T X_1}, \cdots, P_{\theta_\dir^T X_\nclass})$ is tractable to compute in closed-form by sorting the data projected onto each direction.
Combining \autoref{eqn:divergence-minimization} and \autoref{eqn:variational-divergence}, we arrive at the following min-max optimization for alignment:
\begin{align}
    \min_{\T_1,\cdots,\T_\nclass} \max_\theta \tilde{\divergence}(\theta, P_{T_1(X_1)}, P_{T_2(X_2)},\cdots,P_{T_\nclass(X_\nclass)}) \,.
    \label{eqn:min-max-objective}
\end{align}
While this is an adversarial problem, we will not use explicit simultaneous adversarial optimization, which can be challenging as discussed in the introduction.
Rather, we derive a simple alternating iterative approach to this problem which is made possible by the tractable structure of our divergence.
At a high level, we alternate between solving the inner maximization and the outer minimization.
The maximization step forms a variational approximation of the divergence given the current transport maps.
The minimization step adds an invertible layer that \emph{globally minimizes} this variational divergence (i.e., where $\tilde{\divergence} = 0$).
More detailed discussion of the optimization of our algorithms can be found in \Cref{app-sec:inb-discussion}.
The INB algorithm can be seen in \autoref{alg:iterative-algorithm} where for simplicity of exposition, we assume $\ndir = d$ and the more general case is discussed in \Cref{app-sec:m<d}.

For the maximization step, we perform gradient descent on the empirical versions of the multi-distribution maximum K-sliced Wasserstein divergence.
This objective can be written in closed-form as the following problem:
\begin{align}
    \argmax_{\projection: \projection^T \projection = I_\ndir}\; \!\! \sum_{\class=1}^\nclass \!\frac{\weight_\class}{\ndir} \!\sum_{\dir=1}^\ndir \!\frac{1}{\nobs_\class} \!\sum_{i=1}^{\nobs_\class} \!| (\projection_\dir^T\xvec_\class )_{[i]} - \yvec_{[i], \dir}|^2 ,
    \label{eqn:mswd-distance}
\end{align}
where $\xvec_\class \in \mathbb{R}^{d\times n_m} $ is the sample data matrix for the $\class$-th class, $\projection = [\projection_1,\ldots,\projection_\ndir]$, $(\projection_\dir^T \xvec_\class )_{[i]}$ signify the samples from the $\class$-th class distribution projected along the direction $\projection_\dir$ sorted in ascending order, $\yvec_{[i], \dir} \triangleq \sum_{\class=1}^\nclass \weight_\class (\projection_\dir^T \xvec_\class )_{[i]}$ is the empirical barycenter along direction $\projection_{\dir}$, $\ndir \leq \ndim$ is the number of directions, and $I_\ndir \in \R^{\ndir \times \ndir}$ is the identity matrix.
Intuitively, this finds the directions that reveal the largest difference between class distributions along each 1D projection.
We adopt the optimization approach in Sliced Iterative Normalizing Flows (SINF) that optimizes $\theta$ directly on the manifold of orthonormal matrices (also called a Stiefel manifold) using projected gradient descent with backtracking line search (details in \citet{DBLP:conf/icml/DaiS21}).
The algorithm $\multimaxkSW$ can be seen in \autoref{alg:max-k-sw}.

The minimization step can be solved exactly via the SMM solutions for 1D distributions (i.e., Monge maps to the barycenter, which is also known in closed-form) by estimating each of the 1D distributions and then solving for the maps.
Specifically, the solution would be $    T^*_\class = F_{\bary}^{-1} \circ F_\class \, ,$
where $F_\class$ is the CDF function of the $P_{X_\class}$ distribution and $F_{\bary}^{-1}$ is the inverse CDF of the barycenter distribution, which is known to have the following form $F_{\bary}^{-1}(u) = \sum_\class \weight_\class F_\class^{-1}(u)$.
The 1D-Barycenter algorithm can be seen in \autoref{alg:1d-bary}.
These SMM solutions also \emph{locally} minimize the transportation costs to avoid unnecessary distortion from the class distributions.
Therefore, the shared latent distributions will be less distorted than if standard generative normalizing flows were used for each distribution independently (see Experiments).
\begin{algorithm}[h!]
    \caption{1D-Barycenter}
    \label{alg:1d-bary}
\begin{algorithmic}
    \REQUIRE Samples from the $\nclass$ class distributions $ (\zvec_1, \zvec_2, \dots, \zvec_\nclass)$, weight vector $\weightvec$\ENSURE Estimated invertible alignment maps $(t_1, t_2, \cdots, t_\nclass)$
        \FOR{$\class=1,\dots,\nclass$}
        \STATE \COMMENT{Estimate the 1D CDF of $Z_\class$}
        \STATE $F_\class = \textnormal{HistogramDensityEstimation}(\zvec_m)$
        \ENDFOR
        \STATE \COMMENT{Estimate the inverse CDF of barycenter}
        \STATE $F_{\textnormal{bary}}^{-1} = \sum_\class w_\class F_\class^{-1}$
    \STATE $\forall \class, \; t_\class = F_{\textnormal{bary}}^{-1} \circ F_\class $
        \STATE \textbf{return} $(t_1, t_2, \cdots, t_\nclass)$
\end{algorithmic}
\end{algorithm}

\begin{algorithm}[h!]
    \caption{Iterative Na\"ive Barycenter Algorithm}
    \label{alg:iterative-algorithm}
\begin{algorithmic}
    \REQUIRE Samples from the $M$ class distributions $\xvec_1, \xvec_2, \ldots, \xvec_\nclass $, weight vector $\weightvec$, number of directions $K$, number of iterations/layers $L$
    \ENSURE Estimated invertible deep alignment maps $(\T_1, \T_2, \dots, \T_\nclass)$
    \STATE $\T_\class^{(0)} \gets \Id,\quad \forall \class=\{1,\dots,\nclass\}$
    \FOR{$\ell = \{1, 2, \dots, L\}$}
        \STATE  $\forall m, \zvec_\class \gets \T_\class(\xvec_\class)$
        \STATE \COMMENT{Maximization (see \Cref{app-sec:algorithm} for algorithm)} 
\STATE $\projection \gets \multimaxkSW((\zvec_1,\dots,\zvec_\nclass),\weightvec,\ndir)$
        \STATE \COMMENT{Minimization}

        \FOR{$\dir=\{1,\dots,\ndir\}$}
\STATE $\forall \class,\;\zvec'_\class =  \projection_\dir^T \zvec_\class$ \COMMENT{1D projection}
        \STATE $t_{1,\dir},\dots,t_{\nclass,\dir} = \textnormal{1D-Barycenter}(\zvec_1',\dots,\zvec_\nclass')$
        \ENDFOR
\STATE $\forall \class,  t_\class \gets  [t_{\class,1},\dots,t_{\class,\ndir}]  $
        \STATE $\forall \class, \T_\class(\xvec) \gets \projection \Tw_\class (\projection^T \T_\class(\xvec) )$ \ENDFOR 
    \STATE \textbf{return} $(\T_1, \T_2, \cdots, \T_\nclass)$
\end{algorithmic}
\end{algorithm}

\section{RELATED WORK}
\paragraph{Iterative Methods} Iterative Gaussianization  is an iterative density estimation method, that learns invertible flow-based models \citep{Chen2000, Lin2000, Lyu2009, Laparra2011, Balle2016}.
The key idea is to first learn a rotation matrix via ICA \citep{ica1984} or similar method to linearly transform the data, and then Gaussianize each marginal independently. 
\citet{inouye2018deep} extend this by iteratively building normalizing flows from more general ``shallow'' density estimation approaches.
However, these prior iterative approaches are focused on density estimation (i.e., learning a generative model), and in particular, learn a map between a \emph{known} base distribution (e.g. Gaussian) and the unknown data distribution.

Iterative approaches for aligning distributions include Projection Pursuit Monge Map \citep{meng2019ppmm} that iteratively finds interesting directions to project the data onto, and estimates Monge maps for the 1D projected data.
The caveat, however, is that it uses fixed interestingness functions such as variance to find the projection directions.
\citet{kuang2019sample} propose an alternative iterative method for learning optimal maps and the shared representation where each iteration requires the solution of a simpler but \emph{joint} optimal transport problem---rather than solving 1D OT problems as in \citet{meng2019ppmm}.
In practice, \citet{kuang2019sample} use a set of fixed interestingness functions to find the needed structure.
\citet{essid2019adaptive} extend this iterative approach by using an adversarial objective to automatically learn these interestingness functions.
\citet{DBLP:conf/icml/DaiS21} propose SINF as a generative model. 
In theory, the approach could be used to align two distributions but all the experiments in SINF focus on generative models in which one of the distributions is a Gaussian distribution.
They directly solve the optimal transport problem between the source and target distributions. 
In contrast, our approach constructs the map through a shared distribution which preserve the shared structure; thus distorting the original distributions less.
Moreover, thanks to the formulation of barycenter problems, we can naturally deal with multiple distributions, which cannot be done in SINF.
For $\nclass>2$, SINF would need to learn $\binom{\nclass}{2}$ translation maps separately while our model would jointly learn $\nclass$ maps.
As we show in the \Cref{sec:experiment}, our model achieves better alignment performance even for $\nclass=2$ experiments.

\paragraph{Adversarial Methods}
CycleGAN \citep{CycleGAN2017} minimizes the objective function:
\begin{align*}
    \begin{aligned}
    &\argmin_{G,F} \dist_{\textnormal{adv}}(P_{G(X_1)}, P_{X_2}) + \dist_{\textnormal{adv}}(P_{F(X_2)}, P_{X_1}) 
    \! \\
    &+ \!\lambda \!\Big(\E_{P_{X_1}}[\|F(G(x))\! -\! x\|_1] \!+\! \E_{P_{X_2}}[\|G(F(x)) \!-\! x\|_1]\!\Big) \,,
\end{aligned}
\end{align*}
where the distance $\dist_{\textnormal{adv}}$ approximates the distance between distributions via adversarial learning (i.e., minimax learning) and the cycle consistency terms (after $\lambda$) can be seen as a relaxation of an invertibility constraint.
StarGAN \citep{choi2018stargan} generalize CycleGAN to more than two domains.
However, these approaches cannot guarantee invertibility and require expensive and challenging adversarial learning \citep{lucic2018gans,kurach2019the}.

\paragraph{Flow Methods}
AlignFlow \citep{grover2020alignflow} extends CycleGAN by using invertible models so that the cycle consistency constraint is satisfied by construction:
\begin{align}
\begin{aligned}
    \argmin_{T_1, T_2} &\dist_{\textnormal{adv}}(P_{T^{-1}_{2} \! \circ \, T_{1}(X_1)}, P_{X_2}\!) \!+ \! \dist_{\textnormal{adv}}( P_{T^{-1}_{1} \!\circ \, T_{2}(X_2)}, P_{X_1}\!) \\ & + \lambda \Big(\KL(P_{X_1}, P_{T_{1}^{-1}(\alpha)} + \KL(P_{X_2}, P_{T_{2}^{-1}(\alpha)}\Big) \,,
     \end{aligned}
\end{align}
where the first two distance terms (equivalent to CycleGAN) are implemented using adversarial learning, $\alpha$ is a Gaussian prior distribution, and the KL terms are implemented via maximum likelihood.
Unlike our formulation, AlignFlow ignores transportation costs entirely and pushes the shared latent representation towards the assumed prior distribution $\alpha$ rather than the more natural shared latent distribution.
Also, for $\nclass>2$, AlignFlow would require $\binom{\nclass}{2}$ adversarial terms where each term adds significant complexity to training the model.

\paragraph{Wasserstein Barycenter Methods} 
Most existing methods compute Waaserstein Barycenter of discrete distributions.
For example, \citet{DBLP:wb2014} propose a fast algorithm with entropic regularization.
Those methods typically scale poorly with the number of dimensions and are not suitable for many modern machine learning problems.
More recently, several efficient and scalable methods have been proposed to estimate Wasserstein Barycenter over continuous spaces \citep{DBLP:conf/nips/LiGYS20,DBLP:wbfan,korotin}.
\citet{DBLP:wbfan} utilize input convex neural networks (ICNN) \citep{dblp:icnn} to estimate both a generator for barycenter distribution and the transportation map between mariginals and barycenter.
A typical difference between our method and those models is that even though we use the solution to 1D barycenter, our main goal is to efficiently learn \textit{invertible flow models} between marginal distributions rather than estimate the global barycenter.

\paragraph{Domain Adaptation Methods}
Domain Adaptation has become more and more popular recently and there have been several works that leverage the tools from OT for it.
\citet{DBLP:journals/pami/CourtyFTR17} propose to find the discrete optimal transportation map between source and target domains with class regularization.
JDOT \citep{DBLP:jdot} improves it by directly aligning the joint distribution of the marginals and the class conditional distributions.
DeepJDOT \citep{DBLP:deepjdot} futher improves JDOT by learning a shared space in a Convolutional Neural Network for classification.
All these methods are based on solving discrete OT problems and JDOT and DeepJDOT focus more on finding a classifier for domain adaptation instead of finding invertible alignment maps directly.

\section{EXPERIMENTS}\label{sec:experiment}

We explore our iterative alignment method both qualitatively and quantitatively using both 2D simulated data and ``permuted'' MNIST \citep{lecun-mnisthandwrittendigit-2010}---permuted means that our methods do not leverage the image structure of MNIST but merely treat each image as 784-dimensional vector.\footnote{AlignFlow and the FID we use for evaluation do use the image structure but our iterative methods do not.}  Additional experiments, implementation details and results can be found in the appendix, including experiments on FashionMNIST \citep{xiao2017/online}.

\paragraph{Metrics}
We use standard distribution distances to compare the alignment performance across methods.
We first note that the alignment condition can equivalently be written as $P_{X_m} =P_{T_{\class}^{-1} \circ T_{\class'}(X_{\class'})},\quad \forall \class \neq \class'$.
Thus, for every class distribution $P_{X_m}$, we can sample $\nclass-1$ ``fake'' distributions using our invertible transformations $\hat{P}_{X_{\class' \to \class}}=P_{\hat{T}_{\class}^{-1} \circ \hat{T}_{\class'}(X_{\class'})}$.
We merely average the empirical Wasserstein distance between all pairs of real samples and ``fake'' samples, i.e., $\text{WD} = \frac{1}{\nclass^2 - \nclass} \sum_{\class\neq \class'} \hat{W}(P_{X_m}, \hat{P}_{X_{\class' \to \class}})$, where $\hat{W}$ is the Wasserstein distance estimated using samples via the Sinkhorn algorithm \citep{NIPS2013_af21d0c9} with $\epsilon=10^{-4}$ and maximum iterations set to 100.
For higher dimensional data (e.g., MNIST), the Wasserstein distance between samples could be a poor estimator of the true Wasserstein distance \citep{genevay2019sample}.
Thus, we also compute the Frechet Inception Distance score (FID) \citep{DBLP:conf/nips/HeuselRUNH17} for a more fair evaluation, and we similarly compute the average between every pair of real and fake samples, i.e., $\text{FID} = \frac{1}{\nclass^2 - \nclass} \sum_{\class\neq \class'} \hat{\text{FID}}(P_{X_m}, \hat{P}_{X_{\class' \to \class}})$.
We also compute transportation cost to highlight that our algorithm distorts the distributions less and finds a shared latent distribution that is closer to the original distributions because we use the SMM solution for the minimization subproblem, which can be seen to locally minimize the transportation cost.
We estimate the transportation cost by an average over the test set, i.e., $\text{TC}=\sum_{\class=1}^\nclass \frac{\weight_\class}{n_\class} \sum_{x\in X_\class} \|x- \hat{T}_\class(x)\|^2$, where $X_\class$ is the test dataset for the $\class$-th class and lower is better.
We compute the mean and standard deviation over 5 runs of each method.
We also track approximate wall-clock training time for MNIST (all models are trained on a CPU except AlignFlow which is trained on a single GTX 1080 Ti and SINF-Align which is trained on Tesla P100).
More details are in the appendix.
\paragraph{Baseline Methods}

Because prior iterative method focuses on generative models rather than distribution alignment, we adapt prior generative methods to produce alignment approaches.
First, we adapt the iterative density destructors method (DD) \citep{inouye2018deep} by learning \emph{independent} normalizing flows from each class distribution to a fixed uniform distribution, which is the same for all class distributions and thus serves as a fixed shared latent space.
We also adapt SINF \citep{DBLP:conf/icml/DaiS21} to the alignment task (SINF-Align) where we directly find the map between two distributions without any shared representation.
While the SINF paper mentioned that SINF could be used to align any two distributions, the experiments in the paper assumed that one of the distributions was a standard normal distribution---i.e., only generative experiments were performed.
Given that SINF is not symmetric (a point emphasized in the SINF paper), we train two SINFs: one from distribution $X_0$ to $X_1$ and the other in the reverse direction.
We notice a significant difference of the performance of the forward and inverse of SINF maps.  Specifically, the forward map performs well but the inverse map performs poorly (detailed results given in the appendix).
These results suggest that the direction of learning is critical and that a symmetric formulation is more stable.
For MNIST, as a non-iterative baseline, we compare to the invertible AlignFlow \citep{grover2020alignflow}, which explicitly maps both distributions to an assumed prior distribution.

\paragraph{Our Methods}
For our methods, except for INB, as a comparison, we also report the results with the single-layer independent (na\"ive) barycenter (NB) (assume all features are independent of each other and learn alignment maps directly without any projection) and multi-layer random rotation followed by NB (Rand-INB).
Number of layers and other parameters are in the appendix.

\textit{Computational complexity of INB.}
The complexity of the maximization is $\mathcal{O}(J(\nobs \nclass \ndir(d+\log \nobs)+\ndir^2d+\ndir^3))$, where $J, \nclass, \nobs, d, \ndir$ are the number of iterations ($J_{\textnormal{max}}$ in \autoref{alg:max-k-sw}), classes, samples per class, dimensions, and latent dimensions, respectively. The terms come from projecting down to $\ndir$ dimensions, computing SWD via sorting, and updating the projection matrix.
The complexity of the inner minimization is $\mathcal{O}(\nobs \nclass \ndir)$ since each latent dimension can be computed independently and primarily estimates histograms, which have piecewise linear CDFs and inverse CDFs.

\begin{table}[h!]
    \begin{center}
    \caption{
    Transportation cost (TC), sample-based Wasserstein distance (WD, lower is better) for 2D data.
    More 2D datasets in appendix.
    }
        \label{tab:2d-k2-barycenter-flow}
     \resizebox{\columnwidth}{!}{
    \begin{tabular}{l|ll} 
    \hline
    \centering
\textbf{Model}   & \textbf{WD} & \textbf{TC} \\
    \hline
    NB &0.0788 $\pm$ 0.0000 &\textbf{0.4013 $\pm$ 0.000}
     \\ 
    Rand-NB & 0.0047 $\pm$ 0.0011 & 0.4903 $\pm$ 0.0205 \\ 
    INB &\textbf{0.0025 $\pm$ 0.0005} & 0.4832 $\pm$ 0.0282 \\ 
    \hline \hline
    DD &0.0085 $\pm$ 0.0000 &1.2564 $\pm$ 0.0000\\
    SINF-Align(0$\Rightarrow$1)& \textbf{0.0024 $\pm$ 0.0002}&  \longdash\\
    SINF-Align(1$\Rightarrow$0)& \textbf{0.0026 $\pm$ 0.0003}& \longdash\\
    \hline
    \end{tabular}
    }

    \end{center}
\end{table}

\begin{table*}[h!]
    \begin{center}
    \caption{
    Transportation cost (TC), sample-based Wasserstein distance (WD, lower is better), FID (lower is better) and time for MNIST($\nclass=2$). For a fair comparison, the $\ndir$ used for INB ($\nlayer=20$) is adjusted to be the same as SINF which is $56$.
    }
    \label{tab:mnist-k2}
     \resizebox{1.5\columnwidth}{!}{
    \begin{tabular}{l|llll} 
    \hline
\textbf{Model} &   \textbf{WD} & \textbf{FID} & \textbf{TC} & \textbf{Time(s)}\\
    \hline
    NB & 60.010 $\pm$ 0.000 & 229.551 $\pm$ 0.000   &\textbf{28.115 $\pm$ 0.000} & \textbf{25} \\ 
INB ($\nlayer=20$) &23.481 $\pm$ 0.161 & 43.196 $\pm$ 0.588 & 31.671 $\pm$ 0.056& 430 \\
    INB ($\nlayer=250$) & \textbf{23.183 $\pm$ 0.095} & \textbf{37.480 $\pm$ 0.008} & 32.841 $\pm$ 0.097 & 2200\\ 
    \hline \hline
    DD & 39.079 $\pm$ 0.000 & 166.320 $\pm$ 0.000 &235.164 $\pm$ 0.000 & 360 \\
SINF-Align($0\Rightarrow 1$)& 50.151 $\pm$ 0.950 & 247.142 $\pm$ 0.972 &  \longdash & 50\\
    SINF-Align($1\Rightarrow 0$) & 42.658 $\pm$ 1.253 & 202.058$\pm$ 1.716 &   \longdash& 50\\
    AlignFlow($\lambda=$1e-4)  &  56.386  & 158.654 & 392.578 & 220000 \\
    AlignFlow($\lambda=$1e-5)  &  60.452 & 191.983 & 412.531 & 220000 \\    
    \hline
    \end{tabular}}
    \end{center}
\end{table*}

\begin{table*}[h!]
    \begin{center}
    \caption{
    Transportation cost (TC), sample-based Wasserstein distance (WD, lower is better), FID (lower is better) and time for MNIST($\nclass=10$).
    }
    \label{tab:mnist-k10}
    \resizebox{1.5\columnwidth}{!}{
    \begin{tabular}{l|llll} 
    \hline
\textbf{Model} & \textbf{WD} &  \textbf{FID}& \textbf{TC} & \textbf{Time(s)}\\
    \hline
    NB & 65.674 $\pm$ 0.000 & 190.920 $\pm$ 0.000 & \textbf{25.907 $\pm$ 0.000} & \textbf{90} \\ 
    INB & \textbf{41.044 $\pm$ 0.076} &\textbf{86.264 $\pm$ 0.550} &28.934 $\pm$ 0.140 &5000 \\ 
    \hline \hline
    DD &53.587 $\pm$ 0.000 &187.475 $\pm$ 0.000 & 227.171 $\pm$ 0.000&1700\\
    \hline
    \end{tabular}}
    \end{center}
\end{table*}
\begin{figure*}[!t]
\newcommand{\smallfigwidth}{0.12\textwidth}
     \centering
     \begin{subfigure}[b]{0.16\textwidth}
\includegraphics[width=\textwidth]{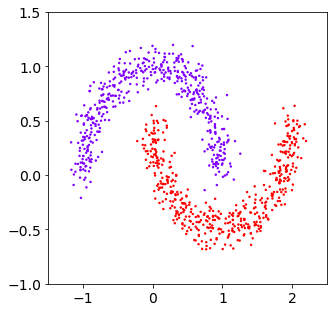}
         \caption{Original Data}
         \label{fig:2d-ori}
     \end{subfigure}
     \begin{subfigure}[t]{\smallfigwidth}
         \centering
         \includegraphics[width=\textwidth]{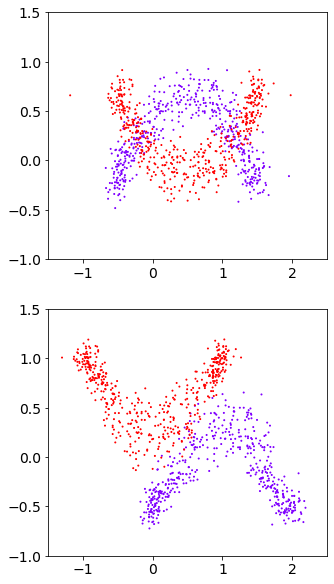}
         \caption{NB}
     \end{subfigure}
     \begin{subfigure}[t]{\smallfigwidth}
         \centering
         \includegraphics[width=\textwidth]{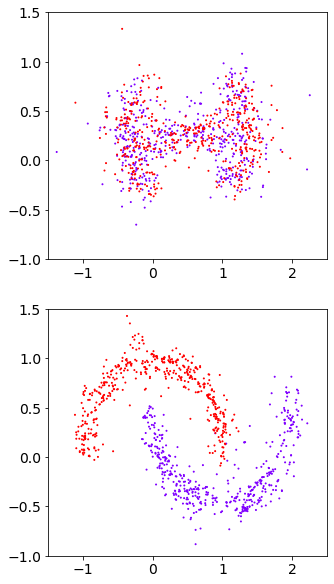}
         \caption{INB}
     \end{subfigure}
\begin{subfigure}[t]{\smallfigwidth}
         \centering
         \includegraphics[width=\textwidth]{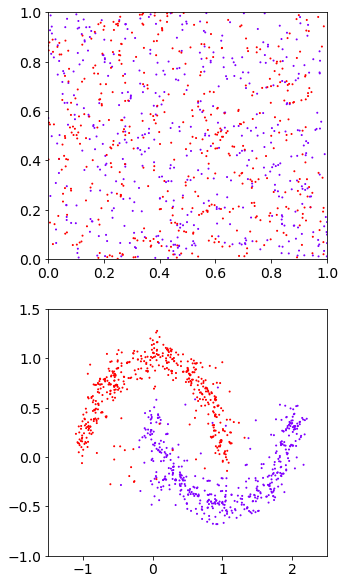}
         \caption{DD}
     \end{subfigure}
          \begin{subfigure}[t]{0.3\textwidth}
         \includegraphics[width=\textwidth]{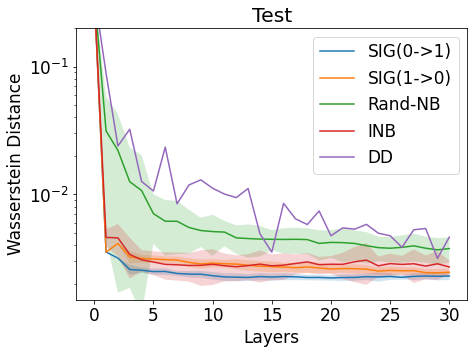}
         \caption{WD over 30 Layers}
        
      \label{fig:2d-convergence}
     \end{subfigure}
        \caption{The purple and red moons in \autoref{fig:2d-ori} represent distributions $P_{X_1}$ and $P_{X_2}$. The goal is to flip them (i.e. find $P_{X_1^{'}} =P_{T^*_{2\to1}(X_2)}$ and $P_{X_2^{'}} =P_{T^*_{1\to2}(X_1)}$). The shared representations (top row) for each method show that our iterative methods (INB) find low transportation cost shared latent spaces whereas DD ignores transportation cost and merely projects both distributions to the uniform distribution.
        The bottom row shows test samples that were flipped to the other class distribution (ideally these ``fake'' samples would look like the original data). \autoref{fig:2d-convergence} shows that INB converge faster than Rand-NB because we optimize for the directions.
        SIG represents SINF-Align with SIG setup (details in the appendix).
}
        \label{fig:2d-k2-qualitative}
\end{figure*}

\begin{figure*}[!t]
     \centering
         \begin{subfigure}[t]{0.24\textwidth}
         \centering
         \includegraphics[width=\textwidth]{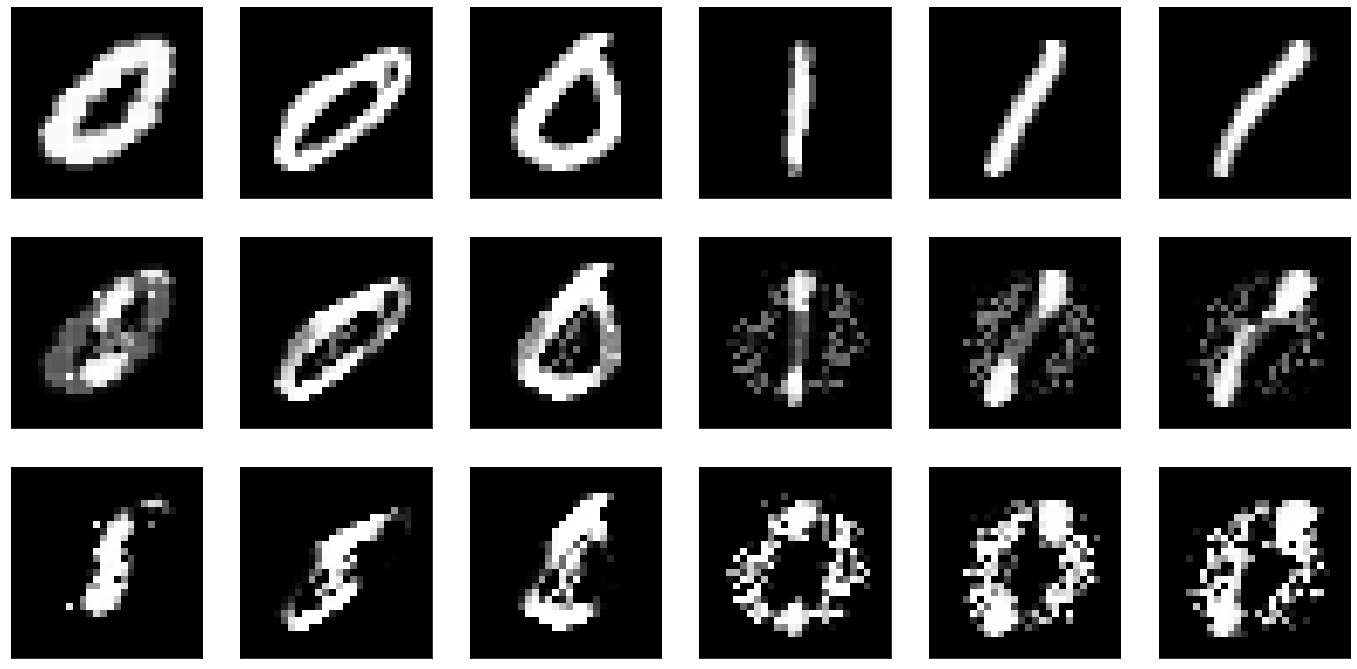}
         \caption{NB}
     \end{subfigure}
     \begin{subfigure}[t]{0.24\textwidth}
         \centering
         \includegraphics[width=\textwidth]{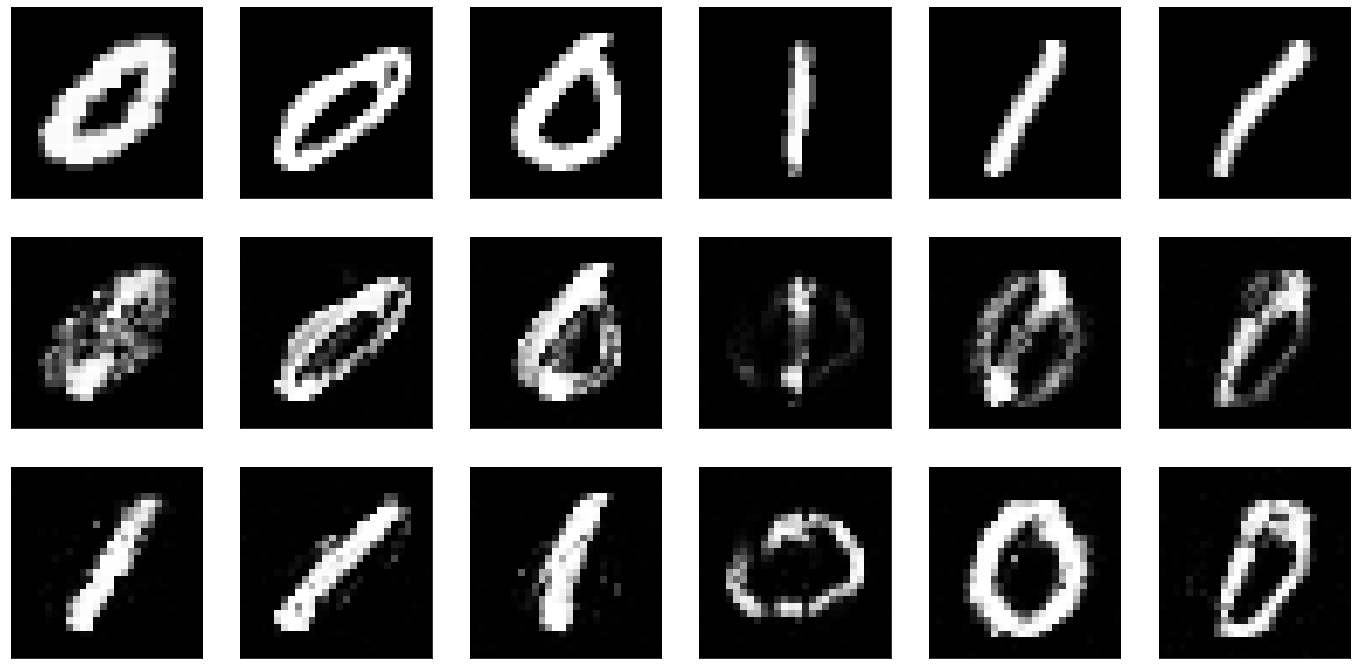}
         \caption{INB}
     \end{subfigure}
     \begin{subfigure}[t]{0.24\textwidth}
         \centering
         \includegraphics[width=\textwidth]{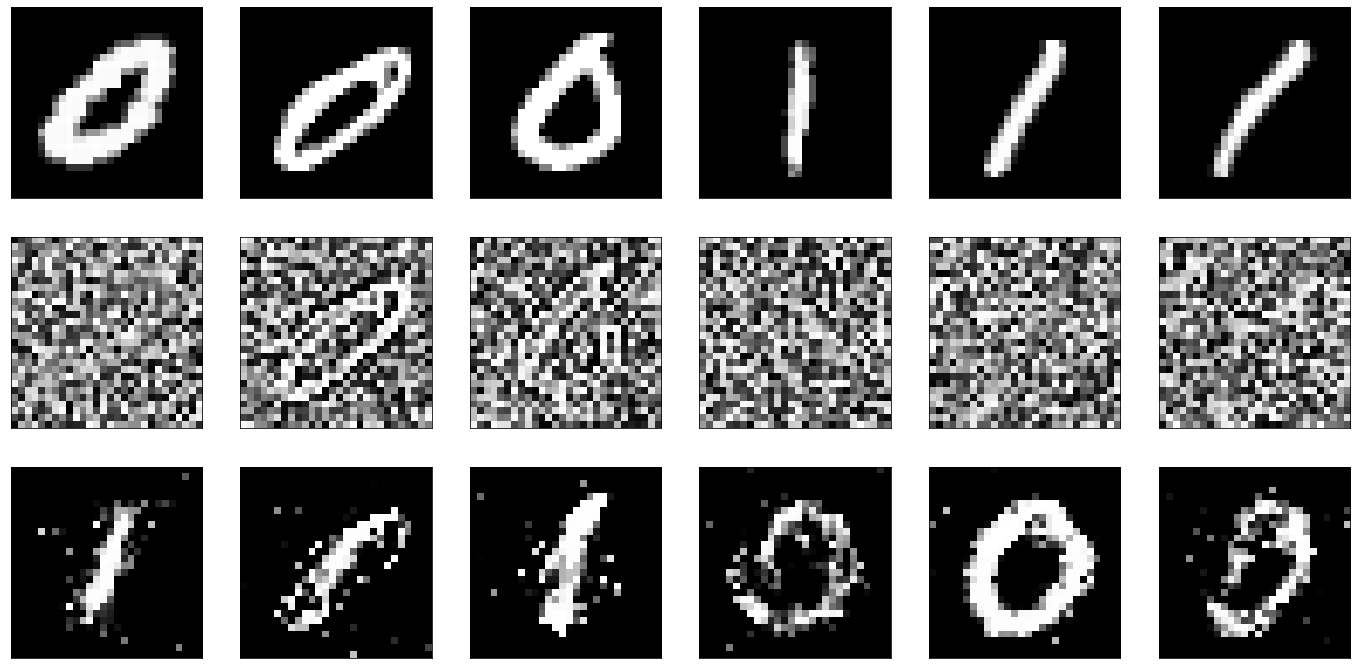}
         \caption{DD}
     \end{subfigure}
        \begin{subfigure}[t]{0.24\textwidth}
         \centering
         \includegraphics[width=\textwidth]{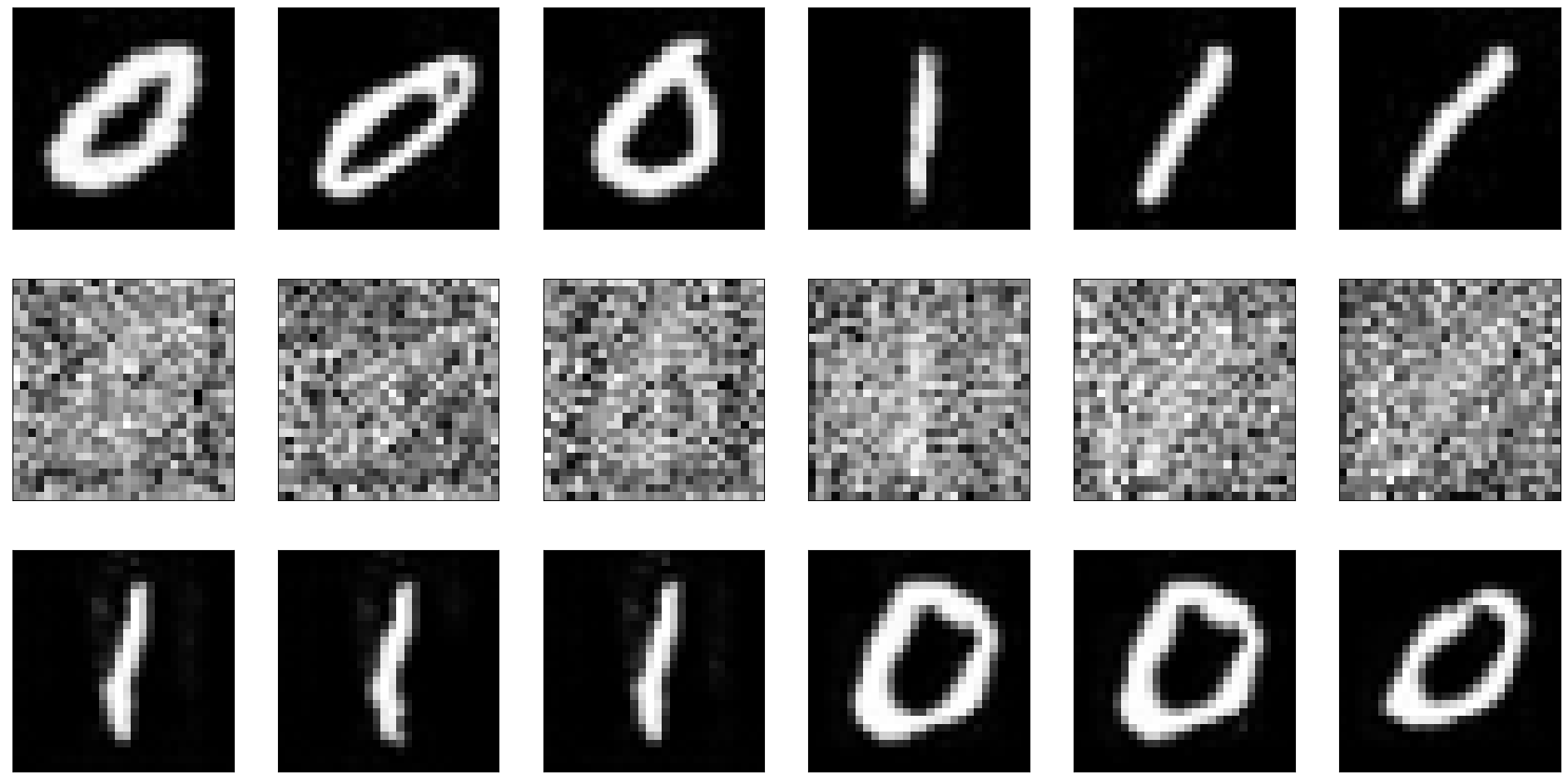}
         \caption{AlignFlow}
     \end{subfigure}
        \caption{These examples demonstrate that our methods find a more natural shared latent representation that preserves structural similarities (e.g., black pixels) between the two digits while DD and AlignFlow do not. The rows from top to bottom are the original MNIST digits, their shared latent representation, and their projection to the space of the other digit (i.e., flipped).}
        \label{fig:mnist-k2-samples}
        \vspace{1em}
\end{figure*}

\paragraph{2D Experiments} 
The qualitative results in \autoref{fig:2d-k2-qualitative} illustrate that our method (INB) finds shared latent space where the transportation cost is low (i.e., where the map distorts the original distributions less), whereas density destructors (DD) ignores transportation costs and projects both distributions to the uniform distribution.
The results for the 2D datasets with $\nclass=2$ in \autoref{tab:2d-k2-barycenter-flow} demonstrate that our iterative flows perform comparable or better than the baseline methods (DD, SINF-Align) in terms of alignment, which is measured by the empirical Wasserstein-2 distance on test data (WD), while having significantly lower transportation cost (TC) on test data.
\autoref{fig:2d-convergence} shows that INB converges much faster than Rand-NB.
Additional experiments and results for $\nclass>2$ are in appendix. 
\begin{table*}[t!]
\centering
\caption{
   Multi-distribution ($\nclass=10$) results for MNIST with INB.
        The labels of the rows represent the class of real samples and the labels of the columns represent the class of flipped samples e.g. the number in the row "2" and column "4" represents the WD between the real "4" samples and the fake "4" flipped from "2" samples. 
    } 
    \label{tab:mnist-k10-matrix}
     \resizebox{1.35\columnwidth}{!}{
\begin{tabular}{|r|r|r|r|r|r|r|r|r|r|r|r|}
\hline & \textbf{0} & \textbf{1} & \textbf{2} & \textbf{3} & \textbf{4} & \textbf{5} & \textbf{6} & \textbf{7} & \textbf{8} & \textbf{9} \\
\hline \textbf{0} & 0.10 & 13.37 & 48.57 & 42.77 & 35.85 & 41.80 & 36.13 & 30.86 & 45.42 & 32.10 \\
\hline \textbf{1} & 39.86 & 0.49 & 47.85 & 41.98 & 34.91 & 41.60 & 34.78 & 29.59 & 44.08 & 31.01 \\
\hline \textbf{2} & 41.02 & 13.86 & 0.05 & 43.88 & 36.75 & 43.93 & 37.43 & 31.24 & 46.24 & 33.47 \\
\hline \textbf{3} & 40.80 & 13.55 & 48.66 & 0.04 & 36.94 & 43.38 & 36.44 & 31.32 & 45.81 & 33.22 \\
\hline \textbf{4} & 40.63 & 14.12 & 48.89 & 43.24 & 0.09 & 43.37 & 37.22 & 31.68 & 46.12 & 32.89 \\
\hline \textbf{5} & 40.61 & 13.46 & 49.03 & 42.95 & 36.39 & 0.08 & 36.20 & 31.42 & 45.66 & 32.88 \\
\hline \textbf{6} & 40.31 & 13.84 & 48.87 & 42.56 & 36.26 & 41.57 & 0.11 & 30.19 & 44.88 & 32.18 \\
\hline \textbf{7} & 40.08 & 13.44 & 48.40 & 42.42 & 36.58 & 42.43 & 35.57 & 0.14 & 45.69 & 32.91 \\
\hline \textbf{8} & 40.59 & 13.67 & 49.48 & 44.32 & 37.94 & 43.41 & 36.47 & 32.01 & 0.08 & 34.12 \\
\hline \textbf{9} & 40.16 & 13.88 & 48.44 & 43.36 & 35.54 & 42.42 & 36.26 & 30.97 & 45.51 & 0.14\\
\hline
\end{tabular}
}
\end{table*}
\paragraph{``Permuted'' MNIST}
Qualitative samples from the latent space and after flipping between the two digits (\autoref{fig:mnist-k2-samples}) highlight that our methods retain shared latent structure such as the black pixels, whereas the generative baselines (DD, AlignFlow) move the shared latent distribution to the assumed prior (uniform or Gaussian, respectively) so that shared structure is also removed.
Quantitative results in \autoref{tab:mnist-k2} demonstrate that  INB has superior performance in terms of both WD and FID. 
Regarding SINF, because the original paper does not test their model on alignment task, we attempt to use their best model to be fair.
We report the result of the best SINF-Align models where the number of layers is chosen based on the best test WD.
Note that SINF-Align usually achieves the best WD after a few layers and that is why the time we report is quite short.
The results demonstrate that our methods perform well in terms of the alignment condition (measured by WD and FID where lower is better) than the iterative baseline (DD, SINF-Align) and end-to-end baseline AlignFlow.
Also, the computational cost is much lower for the iterative methods ($<1$ hour on CPU when $\nclass=2$), whereas AlignFlow trained for 200 epochs on a GPU took approximately 60 hours (thus, we only estimate one model and cannot compute standard deviations for AlignFlow).

While we use $\nclass=2$ to fairly compare to prior methods, our method focuses on multi-distribution alignment for $\nclass>2$.
Therefore, we present quantitative results for $\nclass=10$ in \autoref{tab:mnist-k10} and \autoref{tab:mnist-k10-matrix} (results for $\nclass=3$ in the appendix).
Because no prior methods consider the multi-class case, we only show DD as a baseline method which learns $\nclass$ independent flows to the uniform.
This multi-class situation (i.e. $\nclass>2$) is much more difficult for AlignFlow (which did not implement $\nclass>2$) and would na\"ively require $(\nclass^2 - \nclass)/2$ pairwise adversarial loss terms.
SINF does not provide any natural way to handle the $\nclass>2$ case as well.
Qualitative examples of transforming between every digit and every other digit (i.e., $\nclass=10$) for MNIST are shown in \autoref{fig:mnist-10-samples}.
Notice that even for this multiclass case, almost all transformed digits are recognizable.
We observe that the distributions are not fully aligned in the shared space (e.g., some digit structure remains).
On one hand, this could explain why there are artifacts in the flipped samples in some cases (e.g., "6").
On the other hand, we hypothesize that this indicates the smoothness of our method and explains why it has better alignment compared to SINF-Align.
We leave further investigation to future works.

\begin{figure}[h!]\centering
\includegraphics[width=0.99\linewidth]{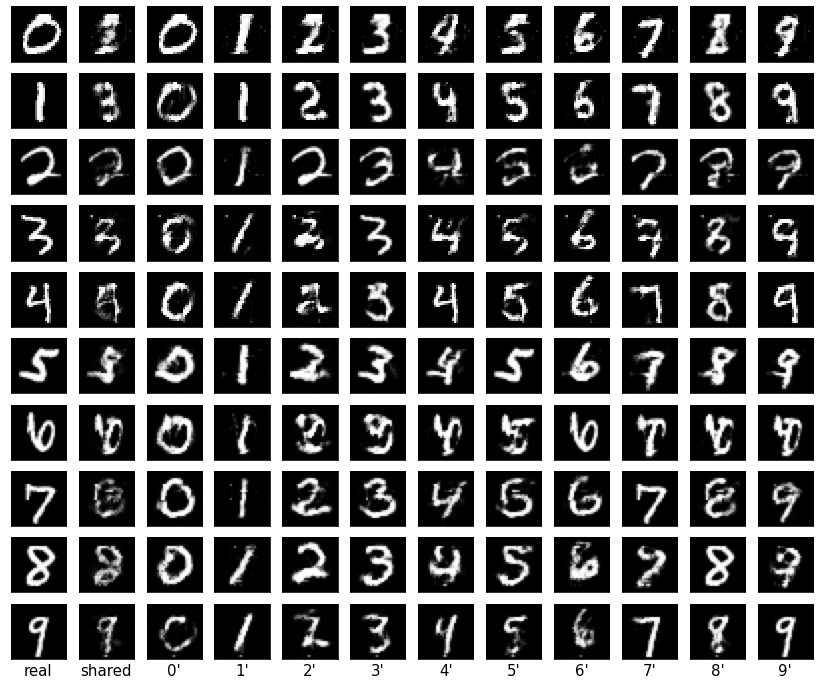}
        \caption{Multi-distribution ($\nclass=10$) results for MNIST with INB.
        The first column shows the real samples and the second column shows their shared latent representations. The following columns show the mappings of the real samples to the distribution of the other digits e.g. all flipped samples in the first row are flipped from the real 0 in the first column.}
        \label{fig:mnist-10-samples}
\end{figure}

\section{DISCUSSION AND CONCLUSION}\label{discussion}

We seek to iteratively align multiple distributions without adversarial learning.
We leverage insights from OT theory to construct an iterative estimation algorithm that alternates between estimation of a tractable divergence via maximization and exact minimization of this variational divergence.
Unlike prior approaches, our formulation does not require a fixed latent distribution and can be symmetrically applied to any number of distributions.
Unlike prior approaches based on deep normalizing flows, our approach is significantly faster.
Despite many advantages of our approach, however, there are also many open challenges.
For example, our current algorithm is greedy.
Though its greedy nature makes it easy to implement, we cannot guarantee that it finds the globally optimal alignment solution.
We leave exploring the non-greedy algorithm to future work.
We believe our work is a first step towards non-adversarial distribution alignment that can open up novel perspectives on distribution alignment.

\subsubsection*{Acknowledgements}
D.I., Z.Z., and Z.G. acknowledge support from the Army Research Lab through contract number W911NF-2020-221. P. R. acknowledges the support of NSF via IIS-1909816.
{
\bibliographystyle{plainnat}
\bibliography{references,Mendeley-fixed}
}

\clearpage
\appendix

\thispagestyle{empty}

\onecolumn \makesupplementtitle

\section{OVERVIEW}
We have organized our appendix as follows:
\begin{itemize}
    \item \Cref{app-sec:proofs} includes the proofs (and key OT results needed for the proofs).
        \item \Cref{app-sec:algorithm} includes the algorithm of multi-distribution max-K-SW and the discussion of the optimization of our algorithm.
    \item \Cref{app-sec:tree-sliced} describes an alternative to max-K-SW using tree-sliced Wasserstein divergence instead that could be used within our algorithmic general framework.
    \item \Cref{app-sec:failure-sinf} describes our investigation on directly using SINF for alignment task.
    \item \Cref{app-sec:additional-experiments} describes additional experiments including additional FashionMNIST experiments and includes quantitative result tables for these experiments (qualitative figures are included in the final appendix section).
    \item \Cref{app-sec:experimental-details} provides more details on our experimental setup including dataset preparation, models, and metric details.
    \item \Cref{app-sec:additional-and-expanded-figures} provides both expanded figures from the main paper and new result figures for the additional experiments.
\end{itemize}

\section{PROOFS}
\label{app-sec:proofs}

\subsection{Symmetric Monge Map Solution Proofs}

\begin{proof}[Proof of \autoref{thm:barycenter-equivalence}]
First, let us denote $\pbary \triangleq P_{T^*_\class(X_\class)}$ for any $\class$ since they are all equal because of the pushforward condition (at this point we do not assume anything about $\pbary$).
We can prove that $T_\class^*$ is the optimal Monge map (which is unique for quadratic cost) from $P_{X_\class}$ to $\pbary$ for all $\class$, i.e., $T_\class^* = T^*_{\class\to\bary}$, via contradiction.
Suppose $T^* \neq T^*_{\class\to\bary}$, then $T^*$ could be replaced by the optimal Monge map and the minimum value could be reduced---which is a contradiction to the optimality of $T^*$.
Given this fact and Brenier's theorem \citep[Theorem 2.1]{peyre2019computational} on the equivalence between the Kantorovich and the Monge map problems, we can now transform our original objective at the optimum $T_\class^*$ to the Kantorovich barycenter objective from \autoref{def:wasserstein-barycenter} at its optimum:
\begin{align}
    & \quad \sum_{\class=1}^\nclass \weight_\class \E_{P_{X_\class}} \left[ c(x,T^*_\class(x))\right]\\
& =\sum_{\class=1}^\nclass \weight_\class \E_{Q^*_\class}[c(x_m, x_\bary)] \\
    &=\sum_{\class=1}^\nclass \weight_\class \min_{Q_\class \in \mathcal{U}(P_{X_\class}, \pbary)} \E_{Q_\class}[c(x_m, x_\bary)] \\
     &=\sum_{\class=1}^\nclass \weight_\class \mathcal{L}_c(P_{X_\class}, P_{X_\bary}) \, ,
\end{align}
where the first equality is by Brenier's theorem and $Q_m^*$ is the optimal Kantorovich joint distribution, the second equality is by the definition of the Kantorovich problem, and the third equality is by the definition of $\mathcal{L}_c$.
Thus, our objective can be equivalently written as optimizing over $\pbary$ for the objective above, which is exactly the definition of a barycenter in \autoref{def:wasserstein-barycenter}.
Thus, $\pbary = \bary(\mu_1, \mu_2, \cdots \mu_\nclass; \weightvec)$.
\end{proof}

\begin{proposition}[{Univariate Barycenter \citep[Remark 9.6]{peyre2019computational}}]
\label{thm:univariate-barycenter}
Given a weight vector $\weightvec$ with cost $c(x,y) = \|x-y\|^2$, the inverse CDF of the barycenter is the weighted average inverse CDF of the class distributions, i.e.,
\begin{align}
    \forall u \in [0,1], \quad F^{-1}_{\bary}(u) = \sum_{\class=1}^\nclass \weight_\class F^{-1}_\class(u) \, ,
\end{align}
where $F^{-1}_\class$ is the inverse CDF of the $\class$-th class distribution.
\end{proposition}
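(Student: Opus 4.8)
The plan is to reduce the barycenter problem in one dimension to a pointwise least-squares problem in the space of quantile functions. First I would invoke the classical closed form for quadratic-cost optimal transport on the real line: for any two probability measures $\nu, \mu_j$ on $\R$ with finite second moments, the monotone rearrangement is an optimal Kantorovich coupling, so that
\begin{align}
\mathcal{L}_c(\nu, \mu_j) = \int_0^1 \big| F_\nu^{-1}(u) - F_j^{-1}(u) \big|^2 \, du \, , \label{eqn:1d-w2-quantile}
\end{align}
where $F_\nu^{-1}$, $F_j^{-1}$ are the (generalized) inverse CDFs (see, e.g., \citet[Remark 2.30]{peyre2019computational}). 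Substituting \eqref{eqn:1d-w2-quantile} into \autoref{def:wasserstein-barycenter} turns the objective $\sum_{j=1}^k \weight_j \mathcal{L}_c(\nu, \mu_j)$ into a functional that depends on $\nu$ only through its quantile function $F_\nu^{-1}$.

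Next I would re-parametrize. Let $\mathcal{Q}$ denote the set of quantile functions of probability measures on $\R$ with finite second moment, i.e., the nondecreasing, left-continuous functions $G \colon (0,1) \to \R$ with $\int_0^1 G(u)^2 \, du < \infty$; the map $\nu \mapsto F_\nu^{-1}$ is a bijection onto $\mathcal{Q}$, with inverse given by pushing Lebesgue measure on $(0,1)$ forward through $G$. Under this identification the barycenter problem becomes
\begin{align}
\min_{G \in \mathcal{Q}} \int_0^1 \sum_{j=1}^k \weight_j \big| G(u) - F_j^{-1}(u) \big|^2 \, du \, . \label{eqn:quantile-barycenter-obj}
\end{align}

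The key step is then to solve \eqref{eqn:quantile-barycenter-obj} pointwise and check admissibility of the resulting minimizer. For each fixed $u$, the integrand is a strictly convex quadratic in the scalar $G(u)$, and since $\sum_j \weight_j = 1$ it is minimized at $G^*(u) = \sum_{j=1}^k \weight_j F_j^{-1}(u)$. Because this pointwise lower bound is attained simultaneously for every $u$, the function $G^*$ minimizes the integral over \emph{all} functions, and in particular over $\mathcal{Q}$, provided $G^*$ itself lies in $\mathcal{Q}$. That check is immediate: each $F_j^{-1}$ is nondecreasing and left-continuous and each $\weight_j \geq 0$, so the finite combination $G^*$ is again nondecreasing and left-continuous, while $\int_0^1 (G^*)^2 \, du < \infty$ follows from the finite-second-moment assumption together with Minkowski's inequality. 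Hence $G^* \in \mathcal{Q}$ is the global minimizer, which gives $F_{\bary}^{-1} = \sum_{j=1}^k \weight_j F_j^{-1}$, as claimed.

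I expect the main obstacle to be the bookkeeping around \eqref{eqn:1d-w2-quantile} rather than anything deep: one must be careful that it is the \emph{Kantorovich} optimum that equals the monotone-rearrangement value (the Monge problem can be ill-posed when $\nu$ has atoms), and that the bijection between probability measures and quantile functions respects the left-continuity convention so that the unconstrained pointwise minimizer is a genuine competitor in \eqref{eqn:quantile-barycenter-obj}. Both facts are standard in one-dimensional optimal transport, so once they are cited the remaining argument is precisely the elementary weighted-least-squares computation above.
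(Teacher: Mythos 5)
Your argument is correct, but note that the paper does not actually prove \autoref{thm:univariate-barycenter}: it is stated as a known result imported verbatim from \citet[Remark 9.6]{peyre2019computational}, so there is no in-paper proof to compare against. What you have written is the standard derivation behind that remark: the isometry between $(\mathcal{P}_2(\R), W_2)$ and a convex subset of $L^2([0,1])$ via quantile functions, under which $\mathcal{L}_c(\nu,\mu_j)=\int_0^1 |F_\nu^{-1}(u)-F_j^{-1}(u)|^2\,du$, followed by pointwise weighted least squares and a check that the pointwise minimizer $\sum_j \weight_j F_j^{-1}$ is itself a quantile function. All the steps hold: the quadratic in $G(u)$ is strictly convex because the weights are nonnegative and sum to one (so at least one is positive), monotonicity and left-continuity are preserved under nonnegative finite combinations, and Minkowski gives the second-moment bound. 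Your proof in fact yields slightly more than the paper uses, namely uniqueness of the one-dimensional barycenter without any absolute-continuity assumption on the inputs, since the pointwise minimizer is unique for a.e.\ $u$ and a measure is determined by its quantile function. The only caution worth keeping explicit, which you already flag, is that the quantile identity is a statement about the Kantorovich optimum, so no Monge-map existence issues arise when the measures have atoms. This buys the paper something concrete: \autoref{thm:barycenter-histogram} relies on exactly this closed form applied to piecewise-linear quantile functions, and your derivation shows it is valid for the (atomless but non-smooth) histogram densities used there.
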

\begin{proposition}[{Univariate Optimal Transport Map \citep[Remark 2.30]{peyre2019computational}}]
\label{thm:univariate-map}
The optimal map between univariate distributions $P_{X_1}$ and $P_{X_2}$ is the composition of the CDF of $P_{X_1}$ with the inverse CDF of $P_{X_2}$, i.e.,
\begin{align}
    T_{1 \to 2}^* = F^{-1}_{2} \circ F_{1} \, .
\end{align}
\end{proposition}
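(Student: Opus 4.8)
The plan is to follow the classical one--dimensional optimal transport argument: reduce the quadratic transport cost to a correlation--maximization problem over couplings, show that the monotone (rearrangement) coupling is the unique optimizer, and then identify the corresponding map explicitly as $F_\beta^{-1}\circ F_\alpha$.

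First I would check that $T \triangleq F_\beta^{-1}\circ F_\alpha$ is a valid transport map, i.e.\ $T_\sharp\alpha=\beta$, where $F_\beta^{-1}(u)\triangleq\inf\{t: F_\beta(t)\ge u\}$ is the generalized inverse (quantile) function. Under the paper's standing assumption that the source measure $\alpha$ has a density, $F_\alpha$ is continuous and the random variable $F_\alpha(X)$ for $X\sim\alpha$ is uniform on $[0,1]$; combined with the standard fact that $F_\beta^{-1}(U)\sim\beta$ whenever $U$ is uniform on $[0,1]$, this yields $T_\sharp\alpha=\beta$. I would also record that $T$ is non-decreasing, being a composition of two non-decreasing maps.

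Next I would show that any non-decreasing map pushing $\alpha$ forward to $\beta$ is cost-optimal for $c(x,y)=\|x-y\|_2^2$. For a coupling $\pi\in\mathcal{U}(\alpha,\beta)$ one has $\int \|x-y\|_2^2\,d\pi = \int x^2\,d\alpha + \int y^2\,d\beta - 2\int xy\,d\pi$, so the first two terms are fixed and minimizing the transport cost is equivalent to maximizing the correlation $\int xy\,d\pi$. A swapping (cyclical monotonicity) argument then forces any optimal $\pi$ to be supported on a non-decreasing set: if $(x_1,y_1),(x_2,y_2)$ lie in the support with $x_1<x_2$ but $y_1>y_2$, then $(x_1-x_2)(y_1-y_2)<0$ gives $x_1y_1+x_2y_2 < x_1y_2+x_2y_1$, so rerouting a bit of mass strictly increases the correlation---a contradiction. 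Hence the optimal plan is the monotone coupling, which is exactly the one induced by the non-decreasing map $T$. (Equivalently, one may invoke Brenier's theorem: the optimal map for quadratic cost is the gradient of a convex potential, and in one dimension subdifferentials of convex functions are precisely the non-decreasing maps.)

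Finally, for uniqueness I would use that $\alpha$ is atomless: a non-decreasing map transporting $\alpha$ to $\beta$ is determined $\alpha$-almost everywhere, so it must agree $\alpha$-a.e.\ with $F_\beta^{-1}\circ F_\alpha$; therefore $T_{\alpha\to\beta}^* = F_\beta^{-1}\circ F_\alpha$. I expect the only real friction to be the measure-theoretic bookkeeping around the generalized inverse $F_\beta^{-1}$ and verifying the pushforward identity when $\beta$ has atoms or $F_\beta$ has flat stretches; the correlation-maximization core of the proof is short and clean, and everything else is standard one-dimensional OT.
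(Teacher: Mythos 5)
Your proof is correct, but note that the paper does not actually prove this proposition: it is imported verbatim as a known result from \citet[Remark~2.30]{peyre2019computational} and used as a black box in the proofs of the univariate symmetric Monge map theorem and the histogram barycenter theorem. So there is no in-paper argument to compare against; what you have written is the standard textbook derivation that underlies the cited remark. Your route --- verify $T=F_\beta^{-1}\circ F_\alpha$ is a valid transport map via inverse transform sampling, reduce the quadratic cost over couplings to correlation maximization, force monotonicity of the support by a swapping argument, and conclude by uniqueness of the monotone rearrangement when $\alpha$ is atomless --- is sound and self-contained. Three minor points of rigor, none of which is a gap: the decomposition $\int \lvert x-y\rvert^2\,d\pi = \int x^2\,d\alpha + \int y^2\,d\beta - 2\int xy\,d\pi$ presupposes finite second moments (otherwise the $W_2$ cost is infinite and the statement is vacuous); the ``rerouting a bit of mass'' step is properly formalized by invoking $c$-cyclical monotonicity of the support of optimal plans rather than swapping two points; and the pushforward identity $F_\alpha(X)\sim \mathrm{Unif}[0,1]$ needs $F_\alpha$ continuous, which the paper's standing density assumption supplies. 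Since the paper leans on this proposition in two downstream proofs, a self-contained argument like yours would be a reasonable addition to its appendix.
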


\begin{theorem}[Optimal 1D Symmetric Monge Maps]
\label{thm:univariate-symmetric-monge}
The optimal univariate symmetric Monge maps are:
$    T^*_\class = F_{\bary}^{-1} \circ F_\class \, ,$
where $F_\class$ is the CDF function of the $P_{X_\class}$ distribution and $F_{\bary}^{-1}$ is the inverse CDF of the barycenter distribution, which is known to have the following form $F_{\bary}^{-1}(u) = \sum_\class \weight_\class F_\class^{-1}(u)$.
\end{theorem}

\begin{proof}From \autoref{thm:barycenter-equivalence}, we know that the solution to the symmetric Monge problem is the Monge map between the class distribution and the barycenter distribution.
From \autoref{thm:univariate-barycenter}, we can form the univariate barycenter distribution given the class distributions.
We can then combine this result with \autoref{thm:univariate-map} to solve for the optimal map between the univariate class distribution and the univariate barycenter distribution. 
\end{proof}

\subsection{Divergence Proofs}

\begin{proposition}
\label{thm:multi-w-is-divergence}
$\multiW(P_{X_1}, \cdots, P_{X_\nclass}) \triangleq \min_{T_1, T_2, \cdots, T_\nclass} \sum_{\class=1}^\nclass \weight_\class \E_{P_{X_\class}} \left[ c(x,T_\class(x))\right]$ such that $ P_{T_\class(X_\class)} = P_{T_{\class'}(X_{\class'})} \,\,\, \forall \class \neq \class'$ as defined in \autoref{def:multi-distribution-wasserstein} is a divergence.
\end{proposition}
\begin{proof}
We need to prove two properties: 1) $\multiW(P_{X_1}, \cdots, P_{X_\nclass}) \geq 0$, and 2) $\multiW(P_{X_1}, \cdots, P_{X_\nclass})=0$ if and only if $P_{X_\nclass} = P_{X_{\nclass'}}, \forall m \neq m'$.
The first property is obvious by inspection of the objective function which is always non-negative.

If $P_{X_\nclass} = P_{X_{\nclass'}}, \forall m \neq m'$, then we can use the trivial solution of all maps being the identity, i.e., $\forall \class, T_\class(\xvec)=\xvec$.  By construction, the constraint is satisfied and the cost will be 0, which is the global optimum of the minimization.

If $\multiW(P_{X_1}, \cdots, P_{X_\nclass})=0$, then we know that $\forall \xvec$ and $\forall \class$, $c(\xvec, T_\class(\xvec)) = 0$ (by contradiction if one of them was $>0$ then it would violate the assumption that the sum was 0). The only function that satisfies this property would be the identity functions for all $T_\class$.
By the constraint of the optimization, we know that $P_{T_\class(X_\class)} = P_{T_{\class'}(X_{\class'})} \,\,\, \forall \class \neq \class'$ and thus since these must be the identity, then we know that $P_{X_\class} = P_{X_{\class'}}, \forall \class \neq \class'$.
\end{proof}

\begin{proposition}
$\multimaxkSW(P_{X_1}, \cdots, P_{X_\nclass}) \triangleq \max_{\theta_1, \dots, \theta_\ndir} \sum_{\dir=1}^\ndir \multiW_2(P_{\theta_\dir^T X_1}, \cdots, P_{\theta_\dir^T X_\nclass})$ as defined in \autoref{def:multi-distribution-max-k-sw} is a divergence.
\end{proposition}
\begin{proof}
The non-negativity property follows directly from the fact that $\multiW_2$ is a divergence which is non-negative.  We now prove that 
$\multimaxkSW(P_{X_1}, \cdots, P_{X_\nclass}) = 0$ if and only if $P_{X_1}=P_{X_2}=\cdots=P_{X_\nclass}$.

If $\multimaxkSW(P_{X_1}, \cdots, P_{X_\nclass}) = 0$, then we can prove that $\forall \theta \in \{\theta \in \R^\ndim: \|\theta\|_2=1\}, \multiW_2(P_{\theta^T X_1}, \cdots, P_{\theta^T X_\nclass})=0$. (The proof for this statement is by contradiction.  Suppose $\exists \theta$ such that
$\multiW_2(P_{\theta^T X_1}, \cdots, P_{\theta^T X_\nclass})>0$.  Then, we could set $\theta_1 = \theta$ in the maximization problem and $\multimaxkSW(P_{X_1}, \cdots, P_{X_\nclass}) > 0$.  Yet this is a contradiction to our assumption that $\multimaxkSW(P_{X_1}, \cdots, P_{X_\nclass}) = 0$.)
Thus, by \autoref{thm:multi-w-is-divergence}, we know that $\forall \theta \in \{\theta \in \R^\ndim: \|\theta\|_2=1\}, \forall \class \neq \class', P_{\theta^T X_\class} = P_{\theta^T X_{\class'}}$. From this we can conclude that $\forall \class \neq \class', P_{X_\class} = P_{X_{\class'}}$ because two joint distributions are equal if and only if the marginals along every direction are equal.

If $P_{X_1}=P_{X_2}=\cdots=P_{X_\nclass}$, then we know that the marginals along all directions must be equal, i.e., $\forall \theta \in \{\theta \in \R^\ndim: \|\theta\|_2=1\}, \forall \class \neq \class', P_{\theta^T X_\class} = P_{\theta^T X_{\class'}}$.  Thus, $\forall \theta, \multiW_2(P_{\theta^T X_1}, \cdots, P_{\theta^T X_\nclass})=0$ and the maximal value of $\max_{\theta_1,\cdots,\theta_\ndir} \sum_{\dir=1}^\ndir \multiW_2(P_{\theta_\dir^T X_1}, \cdots, P_{\theta_\dir^T X_\nclass})$ must also be 0 for any $\theta_1,\cdots,\theta_\ndir$.  Thus, $\multimaxkSW(P_{X_1}, \cdots, P_{X_\nclass}) = 0$.
\end{proof}

\section{ALGORITHMS}\label{app-sec:algorithm}

\subsection{Multi-distribution Maximum K-Sliced Wasserstein Distance}
\begin{algorithm}[h!]
    \newcommand{\dvec}{\bm{d}}
\caption{\multimaxkSW} 
    \label{alg:max-k-sw}
\begin{algorithmic}
    \REQUIRE Samples from the $\nclass$ class distributions $ (\xvec_1, \xvec_2, \ldots, \xvec_\nclass)$, weight vector $\weightvec$, number of directions $\ndir$, max number of iterations $J_{\max}$ 
    \ENSURE Estimated projection matrix $\projection$
    \STATE Randomly initialize $\projection\in \mathbb{R}^{d\times \ndir}$ satisfying $\projection^T \projection = I_\ndir$, 
    $ \projection =  [\projection_1,\dots,\projection_\ndir] $ 
    \FOR{$ j = \{1, 2, \dots, J_{\max}\}$}
        \STATE $\dvec =  \sum_{\class=1}^\nclass \!\frac{\weight_\class}{\ndir} \!\sum_{\dir=1}^\ndir \!\frac{1}{\nobs_\class} \!\sum_{i=1}^{\nobs_\class} \!| (\projection_\dir^T \xvec_\class )_{[i]} - \yvec_{[i], \dir}|^2$
        \STATE $\gvec=[-\frac{\partial \dvec}{\partial \projection_{i,j}}]$, $\uvec = [\gvec,\projection]$, $\vvec=[\projection,-\gvec]$
        \STATE $\projection = \projection - \tau \uvec(I_{2\ndir} + \frac{\tau}{2} \vvec^T \uvec)^{-1} \vvec^T \projection$,   learning rate $\tau$ determined by backtracking line search 
        \IF{$\projection$ converge}
        \STATE Stop
        \ENDIF
    \ENDFOR 
    \STATE \textbf{return} $\projection$
\end{algorithmic}
\end{algorithm}

\subsection{Discussion of INB}\label{app-sec:inb-discussion}

Expanding \autoref{eqn:min-max-objective} and simplifying to a single slice, i.e., $\theta \in \mathbb{R}^d$
\begin{align}
    &\min_{\T_1,\cdots,\T_\nclass} \max_\theta \tilde{\divergence}(\theta, P_{T_1(X_1)}, P_{T_2(X_2)},\cdots,P_{T_\nclass(X_\nclass)}) \nonumber \\
    =&\min_{\T_1,\cdots,\T_\nclass} \max_\theta \multiW(P_{\theta^T T_1(X_1)}, P_{\theta^T T_2(X_2)},\cdots,P_{\theta^T T_\nclass(X_\nclass)}) \nonumber \\
    =&\min_{\T_1,\cdots,\T_\nclass} \max_\theta \left(\begin{aligned}
        \min_{f_1, \cdots, f_\nclass} &\sum_{\class=1}^\nclass \weight_\class \E_{z\sim P_{\theta^T T_\class(X_\class)}}[c(z, f_\class(z)] \\
        \textnormal{s.t.}\quad 
        & P_{f_\class(\theta^T T_\class(X_\class))} = P_{f_{\class'}(\theta^T T_{\class'}(X_{\class'}))} \,\,\, \forall \class \neq \class'
    \end{aligned}\right)
\end{align}
At each iteration, for the maximization problem, we use \autoref{alg:max-k-sw} to find $\theta$.
To solve the outer minimization of $T_\class$, we update our transformation with one layer to achieve the global minimum (given the current $\theta$), i.e., $T'_\class = t_\class \circ T_\class$ will be the optimal solution where we construct $t_\class$ based on solutions to the inner optimization.
Specifically, we solve the inner 1D problems (given a fixed $\theta$ and $T_\class$) denoted by $f^*_1, \cdots, f^*_\nclass$
by estimating the 1D CDF function for each class ($k$ in total) using the whole dataset and finding the \emph{local 1D} barycenter map.
\begin{align}
    f_1^*, \cdots, f_\nclass^* &= 
        \argmin_{f_1, \cdots, f_\nclass} \sum_{\class=1}^\nclass \weight_\class \E_{z\sim P_{\theta^T Z_\class}}[c(z, f_\class(z)] \\
        \textnormal{s.t.}\quad 
        & P_{f_\class(\theta^T Z_\class)} = P_{f_{\class'}(\theta^T Z_{\class'})} \,\,\, \forall \class \neq \class' \nonumber
\end{align}
 where $Z_\class\triangleq  T_\class(X_\class)$.
Then, we can construct $\forall \class, t_{\class}^{*}(\boldsymbol{z})=\theta f_{\class}^{*}\left(\theta^{T} \boldsymbol{z}\right)+\left(\boldsymbol{z}-\theta \theta^{T} \boldsymbol{z}\right)$ as discussed in \Cref{app-sec:m<d}.
Given a fixed $\theta$, the updated $T'_{\class} = t_{\class}^{*} \circ T_{\class}$ is the optimal solution to the outer minimization problem (even when the inner minimization is unconstrained).

\emph{Proof of optimality.}
First, note that the new random variable projected along the slice is equal to the transformed 1D slice distribution, i.e.,
\begin{align*}
    \theta^{T} T_{\class}'\left(X_{\class}\right)
    &=\theta^{T} t_{\class}^{*} \circ T_{\class}\left(X_{\class}\right)\\
    &=\theta^{T} t_{\class}^{*}\left(Z_{\class}\right)\\
     &= \theta^{T}\left(\theta f_{\class}^{*}\left(\theta^{T} Z_{\class}\right)+\left(Z_{\class}-\theta \theta^{T} Z_{\class}\right)\right)\\
    &=f_{\class}^{*}\left(\theta^{T} Z_{\class}\right)+\theta^{T} Z_{\class}-\theta^{T} Z_{\class} \\
    &= f_{\class}^{*}\left(\theta^{T} Z_{\class}\right)
\end{align*}
Now $P_{f_{\class}^*\left(\theta^{T} Z_{\class}\right)}=P_{f_{\class^{\prime}}^*\left(\theta^{T} Z_{\class^{\prime}}\right)}, \forall \class \neq \class^{\prime}$ by the alignment constraint on the $f^*_\class$'s, and thus, combining with above, we have that $P_{\theta^{T} T'_{\class}\left(X_{\class}\right)}=P_{\theta^{T} T'_{\class^{\prime}}(X_{\class^{\prime}})},\forall \class \neq \class^{\prime}$.
Thus, the $W_2$ distance along all slices is 0, and $T'_{\class}$ is the globally optimal solution per the property of $W_2$.

\section{TREE-SLICED WASSERSTEIN DIVERGENCE}
\label{app-sec:tree-sliced}
We note that our general variational algorithm could work for other variational tractable divergences such as the tree-sliced Wasserstein (tree-SW) distance \citep{le2019tree-sliced}.Because the tree-SW can be seen as a generalization of the SW distance, we could similarly define a max-tree-SW distance and a multi-distribution max-tree-SW divergence.
The maximization would be over the tree split structure rather than orthogonal directions as in the multi-distribution maximum K-sliced Wasserstein divergence.
Note that the optimal Monge maps for tree-SW are known in closed form similar to the 1D case \citep{le2019tree-sliced}.
Additionally, the barycenter is also known in closed-form \citep{le2019tree-barycenter}.Thus, the inner maximization problem over tree structures could use a decision tree algorithm to approximately solve the inner maximization problem.
The outer minimization could be solved by first finding the barycenter in closed-form and then computing the optimal maps to this barycenter in closed-form.
For this last step, the tree-SW only provides the amount of mass to move between nodes but does not explicitly define the continuous invertible function to do so.  For simplicity, we can assume the distribution has support on the unit hypercube (if it does not, then we can use CDF functions of the appropriate marginal distributions so that it does satisfy this constraint). For each movement, we could merely define a piecewise linear function defined over the unit interval to move mass across the split.
This could be defined in a top down fashion where at the root node, we use a piecewise linear function to move mass from the left to the right of the split and then recursively apply this idea the nodes below.
This would create a piecewise linear invertible function over continuous space that would match the optimal tree Monge maps.

\section{FAILURE OF SINF FOR ALIGNMENT TASK}
\label{app-sec:failure-sinf}
In \citet{DBLP:conf/icml/DaiS21}, they propose Sliced Iterative Normalizing Flows (SINF). 
SINF first projects the data into lower dimensional space using orthogonal projection found by max-\ndir-SWD. 
Then it aligns the distribution along each direction using know solution to 1D OT problems.
Though they state that this could be used to find the transformation between any two distributions, in the paper, they fix one of the distribution to be standard normal distribution.
In specific, they propose Sliced Iterative Generator (SIG) and Gaussianizing Iterative Slicing (GIS) and they report that the two models perform better for generative modeling and density estimation separately.

In this paper, we report the results of SINF-Align($0\Rightarrow1$) and SINF-Align($1\Rightarrow0$). 
When reporting WD and FID, since SINF is an invertible model, we use the inverse of SINF for inverse transformation and then compute the average.
Since they don't provide any result of applying their model for alignment task, we try our best to compare fairly - we use the same $\ndir=56$ as what they set as default value for MNIST and FashionMNIST and we don't include any hierarchical structure.
And we report the test results at the layer where SINF achieves best test WD.

We observe that for both WD and FID, SINF performs well in the test for task in the same direction of training.
In most cases, it converges quite fast and is relatively stable.
However, when we use it for inverse task, the result is very bad.
In most cases, the WD and FID would keep increasing as we add more layers.
See Figure~\ref{fig:sinf_mnist} and Figure~\ref{fig:sinf_fmnist} for qualitative results.
We want to emphasize the possible failure of directly using SINF for inverse task.
In contrast, our model is trained based on a symmetric objective which naturally avoids this problem.

\begin{figure*}[!h]
     \centering
         \begin{subfigure}{0.4\textwidth}
         \centering
         \includegraphics[width=\textwidth]{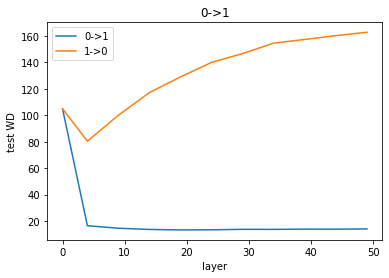}
         \caption{WD: SINF($0\Rightarrow 1$)}
     \end{subfigure}
    \begin{subfigure}{0.4\textwidth}
         \centering
         \includegraphics[width=\textwidth]{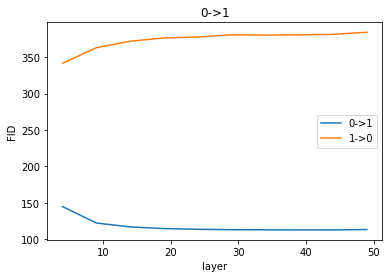}
         \caption{FID: SINF($0\Rightarrow 1$)}
     \end{subfigure}
        \begin{subfigure}{0.4\textwidth}
         \centering
         \includegraphics[width=\textwidth]{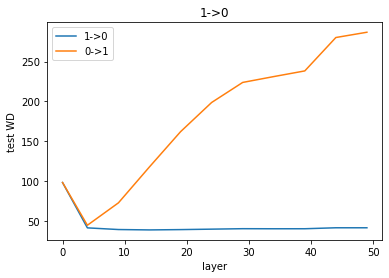}
         \caption{WD: SINF($1\Rightarrow 0$)}
     \end{subfigure}
        \begin{subfigure}{0.4\textwidth}
         \centering
         \includegraphics[width=\textwidth]{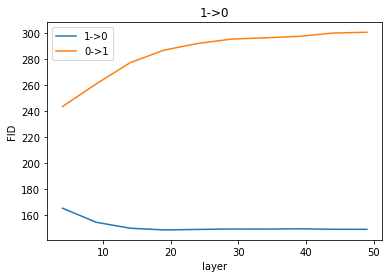}     \caption{FID: SINF($1\Rightarrow 0$)}
     \end{subfigure}
        \caption{Results of SINF-Align for MNIST($\nclass=2$). The results are recorded after each 5 layers. The label of the curve represents which task it is used for.}
    \label{fig:sinf_mnist}
\end{figure*}

\begin{figure*}[!h]
     \centering
         \begin{subfigure}{0.4\textwidth}
         \centering
         \includegraphics[width=\textwidth]{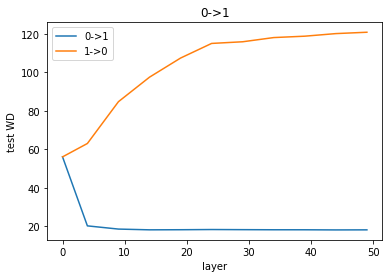}
         \caption{WD: SINF-Align($0\Rightarrow 1$)}
     \end{subfigure}
    \begin{subfigure}{0.4\textwidth}
         \centering
         \includegraphics[width=\textwidth]{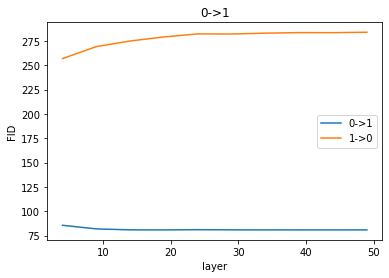}
         \caption{FID: SINF-Align($0\Rightarrow 1$)}
     \end{subfigure}
        \begin{subfigure}{0.4\textwidth}
         \centering
         \includegraphics[width=\textwidth]{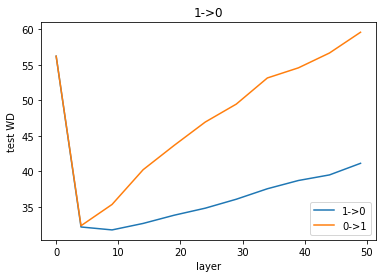}
         \caption{WD: SINF-Align($1\Rightarrow 0$)}
     \end{subfigure}
        \begin{subfigure}{0.4\textwidth}
         \centering
         \includegraphics[width=\textwidth]{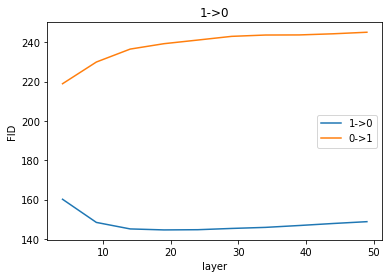}     \caption{FID: SINF-Align($1\Rightarrow 0$)}
     \end{subfigure}
        \caption{Results of SINF-Align for FashionMNIST($\nclass=2$). The results are recorded after each 5 layers. The label of the curve represents which task it is used for.}
    \label{fig:sinf_fmnist}
\end{figure*}

\section{ADDITIONAL EXPERIMENTS}
\label{app-sec:additional-experiments}

In this section, we include the quantitative results for all experiments in addition to those presented in the main paper. 
In the following subsections, brief introductions of each experiment are provided.
More experiment details are provided in the next section.

\subsection{2D Experiment for All Datasets}
For the 2D datasets with $\nclass=2$, we investigate the performance of our iterative methods along with the baselines DD and SINF-Align.
For $\nclass>2$, we only compare to DD since SINF does not have a natural extension for multiple distributions.
See Table~\ref{tab:2d-k2-barycenter-flow-full} and Table~\ref{tab:2d-k4-barycenter-flow} for quantitative results. 
See Figure~\ref{fig:rand_k4}, Figure~\ref{fig:2d_k2_full}, Figure~\ref{fig:random_k4_full} and Figure~\ref{fig:gaussian_k3_full} for expanded figures of the latent representation and translations between distributions. 
In both $\nclass=2$ and $\nclass>2$ cases, INB successfully translates the distributions to look similar to the original data (i.e., the fake distributions by translating from one class to another are similar to the original distributions).

\subsection{FashionMNIST with $\nclass=2$ Class Distributions}
We redo the experiment for MNIST with $\nclass=2$ in the main paper for FashionMNIST with $\nclass=2$. 
See Table~\ref{tab:MNIST-FMNIST-k2} for quantitative result.
See Figure~\ref{fig:mnist_k2_full} and Figure~\ref{fig:fmnist_k2_full} for expanded figures of MNIST and FashionMNIST. 
For fairness, we simply pick the first three samples in test set here. 
These examples demonstrate that our methods find a more parsimonious shared latent representation that
preserves structural similarities (e.g., black pixels) between the two digits while DD and AlignFlow do not preserve this shared structure.

\subsection{MNIST and FashionMNIST with $\nclass=3$ Class Distributions}
We investigate the performance of our models together with DD for more than two class distributions.
See Table~\ref{tab:MNIST-FMNIST-k3} for quantitative result.
See Figure~\ref{fig:mnist_k3} and Figure~\ref{fig:fmnist_k3} for mapping performance. For fairness, we pick the first sample in test set here. 
The latent representation of our models keeps more features of original samples while DD just projects to uniform distribution.

\subsection{MNIST and FashionMNIST with $\nclass=10$ Class Distributions}
We investigate the performance of our models together with DD for ten class distributions. 
See Table~\ref{tab:mnist-fmnist-k10} for quantitative result.
See Table~\ref{tab:mnist-k10-matrix-inb}, Table~\ref{tab:mnist-k10-matrix-nb}, Table~\ref{tab:mnist-k10-matrix-dd}, Table~\ref{tab:fmnist-k10-matrix-inb}, Table~\ref{tab:fmnist-k10-matrix-nb}, Table~\ref{tab:fmnist-k10-matrix-dd} for WD table for each digit with different models.
See Figure~\ref{fig:mnist-10-samples-inb}, Figure~\ref{fig:mnist-10-samples-nb}, Figure~\ref{fig:mnist-10-samples-dd}, Figure~\ref{fig:fmnist-10-samples-inb}, Figure~\ref{fig:fmnist-10-samples-nb}, Figure~\ref{fig:fmnist-10-samples-dd}, 
for expanded figures of mapping performance with different models.
For fairness, we pick the first sample in test set here. 
We can observe that with INB, most mappings seem good though the model struggles to translate in certain cases such as from 6 to 8.

\begin{table*}[!ht]
    \centering
    \caption{Transportation cost (TC), sample-based Wasserstein distance (WD, lower is better) for 2D data.
    The best methods (within one standard deviation of the top method) are bolded.
    }
    \label{tab:2d-k2-barycenter-flow-full}
     \resizebox{\columnwidth}{!}{
        \begin{tabular}{l|ll|ll|ll} 
    \hline
    \centering
     & \multicolumn{2}{c}{\textbf{Moon}} & \multicolumn{2}{c}{\textbf{Random Pattern}} & \multicolumn{2}{c}{\textbf{Circles}}   \\
    \textbf{Model}   & \textbf{WD} & \textbf{TC} & \textbf{WD} & \textbf{TC} & \textbf{WD} & \textbf{TC} \\
    \hline
    NB &0.0788 $\pm$ 0.0000 &\textbf{0.4013 $\pm$ 0.000} &0.3173 $\pm$ 0.0000& \textbf{0.9537 $\pm$ 0.0000} &0.0042 $\pm$ 0.0000 & \textbf{0.0602 $\pm$ 0.0000}
     \\ 
    Rand-NB & 0.0047 $\pm$ 0.0011 & 0.4903 $\pm$ 0.0205 &0.0620 $\pm$ 0.0188 & 1.0234 $\pm$ 0.0583 & 0.0043 $\pm$ 0.0015 & 0.0830 $\pm$ 0.0065\\ 
    INB &\textbf{0.0025 $\pm$ 0.0005} & 0.4832 $\pm$ 0.0282 & 0.0458 $\pm$ 0.0260 & 1.0207 $\pm$ 0.0270 & 0.0033 $\pm$ 0.0005 & 0.0834 $\pm$ 0.0090 \\ 
    \hline \hline
    DD &0.0085 $\pm$ 0.0000 &1.2564 $\pm$ 0.0000& 0.0469 $\pm$ 0.0000 & 3.7005 $\pm$ 0.0000 & \textbf{0.0029 $\pm$ 0.0000} & 1.2580 $\pm$ 0.0000\\
    SINF-Align(0$\Rightarrow$1)& \textbf{0.0024 $\pm$ 0.0002}&  \longdash & \textbf{0.0340 $\pm$ 0.0083} & \longdash & \textbf{0.0028 $\pm$ 0.0002} &\longdash\\
    SINF-Align(1$\Rightarrow$0)& \textbf{0.0026 $\pm$ 0.0003}& \longdash & 0.0637 $\pm$ 0.0105 & \longdash & \textbf{0.0029 $\pm$ 0.0002} & \longdash\\
    \hline
    \end{tabular}
    }
\end{table*}

\begin{table}[!ht]
    \centering
    \caption{The results for the 2D random pattern dataset with $\nclass=4$ and 2D Gaussian with $\nclass=3$  demonstrate that our methods still perform well for $\nclass>2$ in terms of the pushforward constraint, which is measured by the empirical Wasserstein-2 distance on test data (WD). 
    The best methods (within one standard deviation of the top method) are bolded.
    }
    \label{tab:2d-k4-barycenter-flow}
         \resizebox{0.6\columnwidth}{!}{
        \begin{tabular}{l|ll|ll} 
    \hline
    \centering
     & \multicolumn{2}{c}{\textbf{Random Pattern ($\nclass=4$)}} & \multicolumn{2}{c}{\textbf{Gaussian ($\nclass=3$)}}  \\
    \textbf{Model}   & \textbf{WD} & \textbf{TC} & \textbf{WD} & \textbf{TC}  \\
    \hline
    NB &0.488 $\pm$ 0.000& \textbf{9.084 $\pm$ 0.000} & 0.692 $\pm$ 0.000 &\textbf{7.027 $\pm$ 0.000} 
     \\ 
    Rand-NB &\textbf{0.155 $\pm$ 0.023} & 9.652 $\pm$ 0.094 & \textbf{0.067 $\pm$ 0.001}  &7.469 $\pm$ 0.018\\ 
    INB & \textbf{0.153 $\pm$ 0.023} & 9.532 $\pm$ 0.062 & \textbf{0.065 $\pm$ 0.002} & 7.461 $\pm$ 0.006 \\ 
    \hline \hline
    DD & \textbf{0.154 $\pm$ 0.000} & 9.434 $\pm$ 0.000 & 0.096 $\pm$ 0.000 & 7.851 $\pm$ 0.000\\
    \hline
    \end{tabular}
    }
\end{table}

\begin{table*}[!ht]
    \centering
    \caption{Results for FashionMNIST with $\nclass=2$.
    The best methods (within one standard deviation of the top method) are bolded.}
    \label{tab:MNIST-FMNIST-k2}
         \resizebox{0.75\columnwidth}{!}{
    \begin{tabular}{l|llll} 
    \hline
\textbf{Model} &   \textbf{WD} & \textbf{FID} & \textbf{TC} & \textbf{Time(s)}\\
    \hline
    NB &44.038 $\pm$ 0.000  & 118.285 $\pm$ 0.000 & \textbf{20.522 $\pm$ 0.000}&\textbf{40}\\ 
INB ($\nlayer=20$) &24.976 $\pm$ 0.092&84.802 $\pm$ 0.744&25.964 $\pm$ 0.122&430\\
    INB ($\nlayer=250$) &\textbf{24.553 $\pm$ 0.129}&\textbf{79.829 $\pm$ 0.928}&26.989 $\pm$ 0.060&2800\\ 
    \hline \hline
    DD &27.913 $\pm$ 0.000 &90.546 $\pm$ 0.000&181.401 $\pm$ 0.000&300\\
SINF-Align($0\Rightarrow 1$)& 41.111 $\pm$ 0.800& 169.722 $\pm$ 1.452 &  \longdash & 50\\
    SINF-Align($1\Rightarrow 0$) &  31.897 $\pm$ 0.184& 187.153 $\pm$ 0.670&   \longdash& 50\\
    \hline
    \end{tabular}}
\end{table*}

\begin{table*}[!ht]
    \centering
    \caption{Results for MNIST and FashionMNIST with $\nclass=3$.
    It shows that our method enables a natural extension beyond the two class case without requiring a significant increase in computational complexity.
    The best methods (within one standard deviation of the top method) are bolded. INB used for FashionMNIST is set to be $\nlayer=100$ and $\ndir=10$.}
    \label{tab:MNIST-FMNIST-k3}
    \resizebox{\columnwidth}{!}{
    \begin{tabular}{l|llll|llll} 
    \hline
    \textbf{Dataset} & \multicolumn{4}{c}{\textbf{MNIST($\nclass=3$)}} &\multicolumn{4}{c}{\textbf{FashionMNIST($\nclass=3$)}}\\
    \textbf{Model} & \textbf{WD} &  \textbf{FID}& \textbf{TC} & \textbf{Time(s)}& \textbf{WD} &  \textbf{FID}& \textbf{TC} & \textbf{Time(s)}\\
    \hline
    NB & 84.408 $\pm$ 0.000 &229.778 $\pm$ 0.000 &\textbf{28.958 $\pm$ 0.000} &\textbf{25} & 71.341 $\pm$ 0.000 & 166.114 $\pm$ 0.000 & \textbf{28.233 $\pm$ 0.000} & \textbf{25} \\ 
    INB & \textbf{40.116 $\pm$ 0.115} & \textbf{158.940 $\pm$ 0.695} & 34.062 $\pm$ 0.090 & 3700& \textbf{41.820 $\pm$ 0.142} & \textbf{116.871 $\pm$ 1.615} & 34.374 $\pm$ 0.077 & 1400\\ 
    \hline \hline
    DD & 60.226 $\pm$ 0.000 &220.308 $\pm$ 0.000 &233.354 $\pm$ 0.000 &  320& 44.975 $\pm$ 0.000 & 131.043 $\pm$ 0.000 & 171.150 $\pm$ 0.000 & 470\\
    \hline
    \end{tabular}}
\end{table*}

\begin{table*}[h!]
    \begin{center}
    \caption{
    Transportation cost (TC), sample-based Wasserstein distance (WD, lower is better), FID (lower is better) and time for MNIST and FashionMNIST($\nclass=10$).
    }
    \label{tab:mnist-fmnist-k10}
    \resizebox{1\columnwidth}{!}{
    \begin{tabular}{l|llll|llll} 
    \hline
    \textbf{Dataset} & \multicolumn{4}{c}{\textbf{MNIST($\nclass=10$)}} & \multicolumn{4}{c}{\textbf{FashionMNIST($\nclass=10$)}}\\
    \textbf{Model} & \textbf{WD} &  \textbf{FID}& \textbf{TC} & \textbf{Time(s)}& \textbf{WD} &  \textbf{FID}& \textbf{TC} & \textbf{Time(s)}\\
    \hline
    NB & 65.674 $\pm$ 0.000 & 190.920 $\pm$ 0.000 & \textbf{25.907 $\pm$ 0.000} & \textbf{90} & 60.288 $\pm$ 0.000 & 172.690 $\pm$ 0.000&\textbf{47.272 $\pm$ 0.000}& \textbf{90}\\ 
    INB & \textbf{41.044 $\pm$ 0.076} &\textbf{86.264 $\pm$ 0.550} &28.934 $\pm$ 0.140 &5000 & \textbf{36.439 $\pm$ 0.042} & \textbf{122.619 $\pm$ 0.714} & 55.128 $\pm$ 0.043 & 5000 \\ 
    \hline \hline
    DD &53.587 $\pm$ 0.000 &187.475 $\pm$ 0.000 & 227.171 $\pm$ 0.000&1700 & 40.788 $\pm$ 0.000 & 126.625 $\pm$ 0.000 & 127.099 $\pm$ 0.000&1560\\
    \hline
    \end{tabular}}
    \end{center}
\end{table*}

\begin{table}[h!]\centering
\caption{Multi-distribution ($\nclass=10$) results for MNIST with INB.
        The labels of the rows represent the class of real samples and the labels of the columns represent the class of flipped samples e.g. the number in the row "2" and column "4" represents the WD between the real "4" samples and the fake "4" samples flipped from "2" samples.  }
       \label{tab:mnist-k10-matrix-inb}
\resizebox{0.7\columnwidth}{!}{
\begin{tabular}{|r|r|r|r|r|r|r|r|r|r|r|r|}
\hline & \textbf{0} & \textbf{1} & \textbf{2} & \textbf{3} & \textbf{4} & \textbf{5} & \textbf{6} & \textbf{7} & \textbf{8} & \textbf{9} \\
\hline \textbf{0} & 0.10 & 13.37 & 48.57 & 42.77 & 35.85 & 41.80 & 36.13 & 30.86 & 45.42 & 32.10 \\
\hline \textbf{1} & 39.86 & 0.49 & 47.85 & 41.98 & 34.91 & 41.60 & 34.78 & 29.59 & 44.08 & 31.01 \\
\hline \textbf{2} & 41.02 & 13.86 & 0.05 & 43.88 & 36.75 & 43.93 & 37.43 & 31.24 & 46.24 & 33.47 \\
\hline \textbf{3} & 40.80 & 13.55 & 48.66 & 0.04 & 36.94 & 43.38 & 36.44 & 31.32 & 45.81 & 33.22 \\
\hline \textbf{4} & 40.63 & 14.12 & 48.89 & 43.24 & 0.09 & 43.37 & 37.22 & 31.68 & 46.12 & 32.89 \\
\hline \textbf{5} & 40.61 & 13.46 & 49.03 & 42.95 & 36.39 & 0.08 & 36.20 & 31.42 & 45.66 & 32.88 \\
\hline \textbf{6} & 40.31 & 13.84 & 48.87 & 42.56 & 36.26 & 41.57 & 0.11 & 30.19 & 44.88 & 32.18 \\
\hline \textbf{7} & 40.08 & 13.44 & 48.40 & 42.42 & 36.58 & 42.43 & 35.57 & 0.14 & 45.69 & 32.91 \\
\hline \textbf{8} & 40.59 & 13.67 & 49.48 & 44.32 & 37.94 & 43.41 & 36.47 & 32.01 & 0.08 & 34.12 \\
\hline \textbf{9} & 40.16 & 13.88 & 48.44 & 43.36 & 35.54 & 42.42 & 36.26 & 30.97 & 45.51 & 0.14\\
\hline
\end{tabular}
}
\end{table}

\begin{table}[h!]\centering
\caption{Multi-distribution ($\nclass=10$) results for MNIST with NB.  }
       \label{tab:mnist-k10-matrix-nb}
       \resizebox{0.7\columnwidth}{!}{
\begin{tabular}{|r|r|r|r|r|r|r|r|r|r|r|}
\hline & \textbf{0} & \textbf{1} & \textbf{2} & \textbf{3} & \textbf{4} & \textbf{5} & \textbf{6} & \textbf{7} & \textbf{8} & \textbf{9} \\
\hline \textbf{0} & 0.05 & 36.62 & 68.93 & 57.06 & 58.73 & 57.39 & 55.07 & 57.05 & 60.75 & 53.41 \\
\hline \textbf{1} & 84.16 & 0.19 & 79.58 & 69.92 & 61.35 & 76.99 & 69.73 & 58.62 & 71.06 & 62.15 \\
\hline \textbf{2} & 69.26 & 34.74 & 0.03 & 57.43 & 55.72 & 69.63 & 64.54 & 50.98 & 61.80 & 54.20 \\
\hline \textbf{3} & 66.71 & 34.93 & 66.25 & 0.02 & 60.91 & 56.30 & 62.10 & 56.63 & 60.02 & 56.26 \\
\hline \textbf{4} & 72.76 & 30.54 & 71.53 & 66.01 & 0.03 & 68.37 & 61.77 & 45.46 & 62.11 & 39.27 \\
\hline \textbf{5} & 62.54 & 35.27 & 73.39 & 51.94 & 58.95 & 0.01 & 59.55 & 55.85 & 57.19 & 52.40 \\
\hline \textbf{6} & 63.69 & 33.64 & 72.76 & 66.62 & 55.87 & 65.76 & 0.06 & 59.08 & 64.91 & 55.18 \\
\hline \textbf{7} & 78.07 & 32.02 & 72.60 & 68.39 & 52.64 & 71.43 & 69.12 & 0.05 & 66.52 & 45.02 \\
\hline \textbf{8} & 67.04 & 35.10 & 68.20 & 57.73 & 55.24 & 56.15 & 61.91 & 53.14 & 0.03 & 50.75 \\
\hline \textbf{9} & 71.66 & 33.81 & 71.99 & 65.97 & 42.25 & 65.93 & 65.09 & 41.49 & 60.93 & 0.04 \\
\hline
\end{tabular}}
\end{table}

\begin{table}[h!]\centering

\caption{Multi-distribution ($\nclass=10$) results for MNIST with DD.  }
       \label{tab:mnist-k10-matrix-dd}
\resizebox{0.7\columnwidth}{!}{\begin{tabular}{|r|r|r|r|r|r|r|r|r|r|r|}
\hline & \textbf{0} & \textbf{1} & \textbf{2} & \textbf{3} & \textbf{4} & \textbf{5} & \textbf{6} & \textbf{7} & \textbf{8} & \textbf{9} \\
\hline \textbf{0} & 0.03 & 23.55 & 60.27 & 50.89 & 47.45 & 52.91 & 48.72 & 42.76 & 55.22 & 44.07 \\
\hline \textbf{1} & 54.44 & 0.01 & 62.29 & 52.96 & 48.32 & 56.41 & 49.56 & 42.67 & 57.63 & 44.82 \\
\hline \textbf{2} & 53.24 & 23.62 & 0.02 & 51.56 & 47.54 & 56.63 & 49.92 & 42.09 & 56.10 & 44.33 \\
\hline \textbf{3} & 53.00 & 24.22 & 59.51 & 0.02 & 48.86 & 52.72 & 49.47 & 43.34 & 56.60 & 45.37 \\
\hline \textbf{4} & 52.82 & 23.33 & 60.51 & 53.61 & 0.01 & 56.03 & 48.97 & 40.75 & 55.78 & 39.24 \\
\hline \textbf{5} & 51.55 & 23.49 & 60.56 & 50.33 & 48.01 & 0.01 & 48.29 & 43.18 & 53.75 & 44.84 \\
\hline \textbf{6} & 52.91 & 24.27 & 60.67 & 53.00 & 47.38 & 55.15 & 0.02 & 42.67 & 55.76 & 44.55 \\
\hline \textbf{7} & 54.21 & 24.35 & 60.74 & 53.18 & 45.79 & 55.52 & 49.59 & 0.02 & 55.39 & 41.46 \\
\hline \textbf{8} & 52.09 & 24.09 & 59.91 & 52.25 & 47.30 & 51.70 & 49.06 & 42.69 & 0.02 & 44.11 \\
\hline \textbf{9} & 53.38 & 24.41 & 60.24 & 53.74 & 41.86 & 54.78 & 49.67 & 39.50 & 55.12 & 0.02 \\
\hline
\end{tabular}}
\end{table}

\begin{table}[h!]\centering

\caption{Multi-distribution ($\nclass=10$) results for FashionMNIST with INB.  }
       \label{tab:fmnist-k10-matrix-inb}
\resizebox{0.7\columnwidth}{!}{
\begin{tabular}{|r|r|r|r|r|r|r|r|r|r|r|}
\hline & \textbf{0} & \textbf{1} & \textbf{2} & \textbf{3} & \textbf{4} & \textbf{5} & \textbf{6} & \textbf{7} & \textbf{8} & \textbf{9} \\
\hline \textbf{0} & 0.21 & 18.78 & 34.80 & 26.11 & 31.48 & 37.13 & 34.44 & 21.70 & 47.89 & 31.94 \\
\hline \textbf{1} & 33.09 & 0.94 & 34.88 & 28.01 & 31.20 & 38.49 & 35.36 & 21.96 & 50.44 & 32.31 \\
\hline \textbf{2} & 32.20 & 18.42 & 0.28 & 26.77 & 28.10 & 37.65 & 33.55 & 21.74 & 47.43 & 32.03 \\
\hline \textbf{3} & 35.70 & 19.97 & 36.51 & 0.29 & 33.18 & 38.86 & 37.59 & 22.15 & 51.41 & 33.23 \\
\hline \textbf{4} & 32.12 & 18.20 & 33.05 & 26.37 & 0.28 & 37.31 & 34.38 & 21.25 & 46.77 & 31.26 \\
\hline \textbf{5} & 37.01 & 21.38 & 39.18 & 31.12 & 36.19 & 0.13 & 41.03 & 26.57 & 53.54 & 38.28 \\
\hline \textbf{6} & 31.79 & 18.76 & 34.56 & 26.29 & 30.19 & 38.17 & 0.15 & 21.89 & 50.18 & 32.18 \\
\hline \textbf{7} & 34.11 & 19.72 & 35.82 & 29.23 & 32.75 & 39.21 & 37.38 & 0.24 & 50.34 & 33.66 \\
\hline \textbf{8} & 34.69 & 20.41 & 36.50 & 28.81 & 32.87 & 37.06 & 36.92 & 21.34 & 0.04 & 33.13 \\
\hline \textbf{9} & 33.80 & 19.93 & 35.84 & 28.69 & 32.11 & 37.73 & 36.94 & 21.83 & 48.10 & 0.06 \\
\hline
\end{tabular}
}
\end{table}

\begin{table}[h!]\centering

\caption{Multi-distribution ($\nclass=10$) results for FashionMNIST with NB.  }       \label{tab:fmnist-k10-matrix-nb}
\resizebox{0.7\columnwidth}{!}{
\begin{tabular}{|r|r|r|r|r|r|r|r|r|r|r|}
\hline & \textbf{0} & \textbf{1} & \textbf{2} & \textbf{3} & \textbf{4} & \textbf{5} & \textbf{6} & \textbf{7} & \textbf{8} & \textbf{9} \\
\hline \textbf{0} & 0.01 & 31.54 & 47.74 & 34.91 & 43.80 & 64.02 & 44.57 & 37.95 & 77.73 & 59.11 \\
\hline \textbf{1} & 56.71 & 0.05 & 70.61 & 40.41 & 62.99 & 76.58 & 67.14 & 45.92 & 100.12 & 70.20 \\
\hline \textbf{2} & 41.04 & 38.60 & 0.00 & 37.02 & 29.53 & 58.41 & 37.37 & 38.19 & 69.86 & 58.64 \\
\hline \textbf{3} & 47.76 & 28.43 & 60.22 & 0.02 & 54.82 & 69.33 & 59.57 & 40.30 & 91.44 & 64.35 \\
\hline \textbf{4} & 45.42 & 36.13 & 36.38 & 39.10 & 0.01 & 62.69 & 39.69 & 39.06 & 73.08 & 60.68 \\
\hline \textbf{5} & 64.39 & 50.18 & 73.13 & 55.80 & 68.76 & 0.02 & 71.95 & 31.25 & 85.69 & 46.48 \\
\hline \textbf{6} & 35.17 & 31.87 & 37.16 & 34.27 & 33.03 & 58.02 & 0.00 & 37.19 & 71.92 & 56.84 \\
\hline \textbf{7} & 67.13 & 51.83 & 76.76 & 57.98 & 70.58 & 52.46 & 74.84 & 0.08 & 95.52 & 63.89 \\
\hline \textbf{8} & 50.87 & 43.78 & 53.48 & 46.20 & 46.52 & 51.57 & 53.05 & 29.71 & 0.00 & 47.45 \\
\hline \textbf{9} & 61.18 & 47.73 & 67.24 & 53.22 & 62.20 & 50.29 & 67.60 & 31.60 & 76.36 & 0.01 \\
\hline
\end{tabular}
}
\end{table}

\begin{table}[h!]\centering

\caption{Multi-distribution ($\nclass=10$) results for FashionMNIST with DD.  }
       \label{tab:fmnist-k10-matrix-dd}
\resizebox{0.7\columnwidth}{!}{
\begin{tabular}{|r|r|r|r|r|r|r|r|r|r|r|}
\hline & \textbf{0} & \textbf{1} & \textbf{2} & \textbf{3} & \textbf{4} & \textbf{5} & \textbf{6} & \textbf{7} & \textbf{8} & \textbf{9} \\
\hline \textbf{0} & 2.27 & 22.32 & 34.02 & 31.18 & 34.12 & 46.94 & 38.32 & 25.59 & 58.34 & 39.85 \\
\hline \textbf{1} & 33.26 & 0.55 & 35.24 & 31.82 & 36.16 & 47.57 & 39.30 & 25.58 & 61.59 & 41.87 \\
\hline \textbf{2} & 32.05 & 21.74 & 2.32 & 30.75 & 32.19 & 45.94 & 37.05 & 24.53 & 56.22 & 40.31 \\
\hline \textbf{3} & 33.77 & 24.07 & 33.27 & 1.06 & 34.72 & 47.59 & 40.46 & 26.30 & 60.18 & 39.87 \\
\hline \textbf{4} & 33.57 & 22.20 & 31.69 & 31.13 & 1.45 & 45.99 & 35.26 & 24.35 & 55.70 & 39.95 \\
\hline \textbf{5} & 34.94 & 23.12 & 36.14 & 34.61 & 35.80 & 0.01 & 40.96 & 25.04 & 61.08 & 38.95 \\
\hline \textbf{6} & 32.41 & 22.60 & 32.93 & 31.12 & 32.43 & 46.53 & 1.80 & 24.55 & 57.87 & 39.91 \\
\hline \textbf{7} & 34.12 & 24.21 & 35.72 & 34.49 & 36.91 & 45.84 & 40.66 & 0.15 & 61.25 & 40.71 \\
\hline \textbf{8} & 33.00 & 22.01 & 33.58 & 31.80 & 34.06 & 46.79 & 37.78 & 24.64 & 0.97 & 40.03 \\
\hline \textbf{9} & 33.95 & 23.10 & 35.69 & 33.70 & 36.26 & 45.88 & 39.43 & 24.41 & 58.87 & 0.29 \\
\hline
\end{tabular}
}
\end{table}

\section{EXPERIMENT DETAILS}
\label{app-sec:experimental-details}

\subsection{Histogram-based 1D Density Estimators for NB Method}
For high flexibility yet low computational cost, we choose to use a histogram-based density estimator for our independent component (nai\"ve) layers (NB) in our experiments.
While histograms are generally efficient and reasonable non-parametric estimators, one key drawback is that you must choose the interval for the histogram (e.g., using the minimum and maximum of the data).
This can yield odd edge conditions if the interval is not chosen properly.
Thus, to avoid this challenge, we first estimate a preprocessing transformation to squeeze the data to the interval $[0,1]$ and then estimate a histogram on this fixed interval.
In particular, we merely use a Gaussian CDF (where the mean and covariance are estimated from the data) to preprocess the data.
We then estimate a histogram on the transformed data.
This can be seen as an almost trivial 1D normalizing flow where the histogram is a learned base prior distribution and the Gaussian CDF is the flow.
We use the code from deep density destructors \citep{inouye2018deep} to implement this estimation procedure.
Note that this estimation procedure only requires estimating a 1D Gaussian and a 1D histogram---both of which have minimal computational cost.

\subsection[Details when the number of target directions is less than the dimensionality]{Details when the number of target directions is less than the dimensionality ($\ndir < \ndim$)}\label{app-sec:m<d}
For the INB layer, if the number of target directions $\ndir$ is less than the dimensionality $\ndim$, we can define a partial independent components layer that only acts on $\ndir$ directions.
From a theoretical viewpoint, we could adjust our estimators as follows:
\begin{enumerate}
    \item For estimating $ \projection$, the other $\ndim-\ndir$ directions of $ \projection$ can be filled in with an arbitrary orthonormal subspace.
    \item When estimating the independent class distributions, we could assume that the $\ndim-\ndir$ directions have the same distribution for \emph{all} classes.
\end{enumerate}
The first assumption allows us to preserve the full dimensionality of the data when projecting into the latent space.
The second assumption implies that the transform along the $\ndim-\ndir$ directions is the identity because all the class distributions are the same, which implies that their barycenter is equal to the class distributions, which implies that the symmetric Monge map is merely the identity function (see \autoref{thm:univariate-map}).
Thus, it can be seen that these assumptions roughly just ignore the $\ndim-\ndir$ directions.

In practice, we do not have to actually create a full orthogonal matrix $\projection$ or estimate the class distributions along the other $\ndim-\ndir$ directions.
We can instead use truncated orthogonal matrices (i.e., where the columns are orthogonal but it is not square) and truncated joint transformations.
More formally, we can create the following invertible but ``truncated'' transform to avoid unnecessary computation as is done in \citep{DBLP:conf/icml/DaiS21}:
\begin{align}
    T_\class^*(\xvec) = \projection t_\class^*(\projection^T \xvec) + \xvec^{\perp} =  \projection t_\class^*(\projection^T \xvec)  + (\xvec -  \projection \projection^T\xvec) \, ,
\end{align}
where $t_m^*=[t_{\class,1}^*, \dots, t_{\class,\ndir}^*]$ and $\xvec^{\perp}\triangleq \xvec -  \projection \projection^T\xvec$ contains the components that are perpendicular to $ \projection$.
Note that this transformation is invertible and equivalent to the non-truncated ``theoretical'' version described above but requires significantly less computation.

\subsection{Datasets}
In each run of our experiments, we use the same data even for our simulated data (i.e., we use the same random seed for generating the data for each run). 

\paragraph{2D distributions}
For 2D data, we use the fixed samples for each repetition of experiment (i.e., we produce simulated data for all runs rather than producing new simulated data for each run). 
\begin{itemize}
    \item $\nclass=2$: Datasets of Moon, Random Pattern, Circles are generated by \texttt{make\_moons}, \texttt{make\_classification} and \texttt{make\_circles} in \texttt{sklearn.datasets} respectively. 
    The original number of training samples is 2000 and the original number of test samples is 1000.
    \item $\nclass>2$: Dataset of Random Pattern ($\nclass=4$) is generated by \texttt{make\_classification} in \texttt{sklearn.datasets}. 
    The original number of training samples is 2666 and the original number of test samples is 1334.
    Dataset of Gaussian ($\nclass=3$) is generated by \texttt{MultivariateNormal} in \texttt{torch.distributions.multivariate\_normal} with different means and covariances.
    The original number of training samples is 4000 and the original number of test samples is 2000.
\end{itemize}

\paragraph{MNIST and FashionMNIST} We first take the full MNIST dataset (70k samples) and split into training and testing split.
The dimensionality of MNIST and FashionMNIST datasets is 784.
To ensure all classes have the same number of samples in the training and test split, we take the minimum number of samples over all classes and truncate the samples of all digits to that number.
The numbers vary slightly depending on the number of class distributions $\nclass$ and datasets but are approximately 4500 samples per digit for training and 2300 samples per digit for testing (Experiment for MNIST and FashionMNIST with $\nclass=10$ has approximately 1800 samples per digit for testing). 

For our models including DD, we preprocess the data by dequantizing the original data with uniform distribution and dividing by 256 to create a continuous distribution over the unit hypercube.
For AlignFlow, the data is further normalized to the range $[-1,1]$ to serve as the input to the Real-NVP and the GAN discriminator as in the original AlignFlow paper.

See below for the exact classes we use for our experiments.
\begin{itemize}
    \item MNIST with $\nclass=2$: We use digit 0 and 1.
    \item FashionMNIST with $\nclass=2$: We use T-shirt and trouser.
    \item MNIST with $\nclass=3$: We use digit 0, 1 and 9.
    \item FashionMNIST with $\nclass=3$: We use T-shirt, trouser and pullover.
    \item MNIST with $\nclass=10$: We use digit 0-9.
\end{itemize}

\subsection{Models for 2D Experiments}

\paragraph{Two class distributions ($\nclass=2$)}
\begin{itemize}
    \item Number of layers: All iterative models (including INB, NB-INB, Rand-NB, NB-Rand-NB, DD, SINF-Align) use 15 layers.
    \item Number of dimensions for orthogonal transformation: We apply orthogonal transformation in the full space with dimension 2, i.e., $\ndir=d=2$.
\item INB: We iteratively fit NB after orthogonal transformation based on max sliced Wasserstein distance. The maximum number of iterations for $\maxkSW$ $J_{max}$ is set to be 200 for all experiments.
\item Rand-NB: We iteratively fit NB after random orthogonal transformation found by QR decomposition of a matrix generated by \texttt{torch.randn}.
\item NB-INB and NB-Rand-NB: We first perform a full-dimensional NB layer and then follow this by 14 iterations of INB/Rand-NB.
    \item DD: For the univariate histogram density estimator, we use 40 bins and set $\alpha=1$, which corresponds to the pseudo-counts added to each bin.
    \item SINF-Align: We use the SIG code from original github repo for the SINF paper \citep{DBLP:conf/icml/DaiS21}. 
\end{itemize}

\paragraph{More than two class distributions ($\nclass>2$)}
\begin{itemize}
    \item  Basically the setup is very similar to the $\nclass=2$ case. The differences are listed as below.
    \item Number of layers: All iterative models (including INB, NB-INB, Rand-NB, NB-Rand-NB, DD) use 30 layers.
    \item Number of dimensions for orthogonal transformation: We apply orthogonal transformation in the space with dimension 2.
    \item DD: It is basically the same as the $\nclass=2$ case but with a different initial destructor.  Additionally, we add a normal distribution CDF and inverse CDF as pre and post processing transformations.
\end{itemize}

\subsection{Models for MNIST and FashionMNIST}

\paragraph{Two class distributions ($\nclass=2$)}
\begin{itemize}
    \item Number of dimensions for INB: We use orthogonal transformation with $\ndir=30$ directions which is much smaller than the ambient dimensions of $d=784$ similar to the the SINF paper \citep{DBLP:conf/icml/DaiS21}.
    \item Number of layers: We use 250 layers for INB and 10 layers for DD. 
    In this way, the product of the number of layers and the number of dimensions while fitting the NB/DD are approximately the same i.e. $250\times30 \approx 10\times784$.
    \item INB: We add a normal distribution inverse CDF and CDF at the start and the end of the entire INB model as pre and post processing transformations to project the unit data into the real space for transformation.
    \item DD: The setup of DD is basically the same as what we use for 2D experiments with $\nclass>2$ except that we remove the pre and post processing transformations with the normal distribution CDF and inverse CDF since the data is already on the unit hypercube.
    \item Alignflow: The AlignFlow implementation is done through the direct clone from the Github repository with some modifications on the code and parameters setup. We first follow the AlignFlow paper to have these general parameters getting set up: the batch size is 16, the learning rate is set to a fixed \texttt{2e-04}, maximum gradient norm is 10. We further set the \texttt{data\_constraint} value inside \texttt{RealNVP} model to be 0.999998.
    
    We train 200 epochs for choices of the lambda value \texttt{1e-05} and \texttt{1e-04}.
    For the Real-NVP model, the model is a four scale setup. 
    The first three scales contain three checkerboard coupling layers followed by three channelwise coupling layers. Then the data is squeezed and split so that half the data goes to the next scale. For the final scale, we only perform the checkerboard coupling layer four times.
    The squeeze operation is simply by turning each subvolume $4 \times 4 \times 1$ into the subvolume $1 \times 1 \times 4$. And the splitting operation tries to split the last dimension into two parts. Also within each coupling layer, we parameterize the scale and translate factors by using the ResNet structure with number of blocks equals 4. And the number of channels for the ResNet is set to 32 and gets doubled every time when we switch the coupling layer from checkerboard layer to channelwise layer. For the GAN setup, the discriminator is set to have 5 convolutional layers with kernel size 4 and stride 1. The number of channels is doubled each time when passing to the next layer with the initial value 32 for the generator and 64 for the discriminator.
\end{itemize}

\paragraph{More than two class distributions ($\nclass=3$)}
\begin{itemize}
    \item INB: The INB used for FashionMNSIT is set to be $\nlayer=100$ and $\ndir=10$.
    \item The setup of other models is exactly the same as the $\nclass=2$ case.
\end{itemize}

\paragraph{More than two class distributions ($\nclass=10$)}
\begin{itemize}
    \item Number of dimensions for INB: We use orthogonal transformation with $\ndir=10$ to transform the original distribution with dimension $d=784$.
    \item Number of layers: We use 100 layers for INB since the working dimension is only $\ndir=10$ for each layer while for DD we only use 10 layers because the working dimension is $d=784$.
    Thus, if we compare the total number of dimension-wise transformations INB has $100\times10=1000$ transformations while DD can have $784 \times 10=7840$ transformations. 
    Nevertheless, INB still performs better in general based on our quantitative and qualitative results in other sections. 
\end{itemize}

\subsection{Metrics for 2D Experiments}
\begin{itemize}
    \item \emph{Transportation cost} -
    We find the averaged squared distance for each class separately and use uniform weight to take the average over all class distributions, i.e.,
    \begin{align}
        \frac{1}{\nclass} \sum_\class \frac{1}{|X_\class|} \sum_{x\in X_\class} \|x - \hat{T}_\class(x)\|_2^2 \, ,
    \end{align}
    where $|X_\class|$ is the number of samples in the test set for the $\class$-th class distribution and $\hat{T}_\class$ are the estimated maps.
    \item \emph{Wasserstein distance} - 
    For the test samples, we form ``fake'' samples for each class distribution by using the estimated maps, i.e.,
    \begin{align}
        \widetilde{X}_{\class'\to \class} = \hat{T}_\class^{-1}(\hat{T}_{\class'}(X_{\class'})), \quad \forall \class'\neq \class,
    \end{align}
    where $\hat{T}_\class$ are the estimated maps.
    We then use the Sinkhorn algorithm (with $\epsilon=10^{-4}$ and maximum iterations set to 100) to estimate the WD between the real and fake samples over all possible real-fake pairs, i.e.,
    \begin{align}
        \frac{1}{\nclass^2 -\nclass} \sum_{\class=1}^\nclass \sum_{\class'\neq \class} \textnormal{SinkhornWD}(X_\class, \widetilde{X}_{\class' \to \class}) \, .
    \end{align}
    \item \emph{Repetitions} - We repeat the entire map estimation process and metric evaluation 5 times to average over random effects and calculate standard deviations for each method (except AlignFlow).
\end{itemize}

\subsection{Metrics for MNIST and FashionMNIST}
\paragraph{Transportation Cost} 
The setup for transportation cost is the same as 2D experiments except for the experiment with AlignFlow since the scale of the input and output are different in AlignFlow.
Specifically, iterative methods such as NB, INB, and DD, have images and latent spaces to be in the range $[0,1]$ for each dimension.
However, in AlignFlow, images are normalized into $[-1,1]$, and the latent space is a normal distribution. Therefore, for the purpose of comparison with all the iterative methods, we need some modifications on the transportation cost for the AlignFlow.
We can rescale the input domain from $[-1, 1]$ to $[-0.5,0.5]$ simply by dividing the input by $2$, which gives a unit domain as for the iterative methods.
We can do the same for the latent space which makes the Gaussian prior to have a standard deviation $0.5$ instead of 1.
By doing these pre and postprocessing steps, we can get approximately the same scale in both image space and latent space as the unit scale for the iterative methods. The transportation cost is then $c(\frac{1}{2}x,\frac{1}{2}z) = \|\frac{1}{2}(x-z)\|^2 = \frac{1}{4}c(x,z)$. Therefore, we manually divide the transportation cost computed in the unscaled space by a factor of $4$ for the AlignFlow paper for the purpose of fair comparison.
Note that this added scaling favors the baseline method AlignFlow---without it, the AlignFlow transportation cost would be worse.
Additionally, because AlignFlow is so computationally expensive, we do not repeat the estimation process five times and thus cannot compute standard deviation for AlignFlow transportation costs.

\paragraph{Wasserstein Distance} The  setup of Wasserstein Distance is basically the same as that in 2D experiments except that we partition the data.
In all experiments, the partition size is set to be $500$.
The final WD is computed as the weighted average of that of all partitions.

\section{ADDITIONAL AND EXPANDED FIGURES}
\label{app-sec:additional-and-expanded-figures}
This section includes the qualitative results for additional experiments and expanded figures from the main paper.

\clearpage
\begin{figure*}[!b]
     \centering
     \begin{subfigure}[b]{0.18\textwidth}
\includegraphics[width=\textwidth]{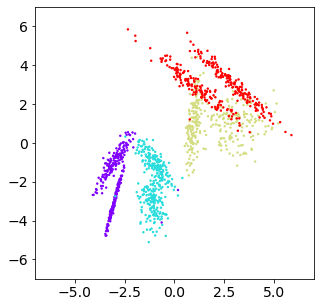}
         \caption{Original Data}
\end{subfigure}
     \begin{subfigure}[t]{0.18\textwidth}
         \centering
         \includegraphics[width=\textwidth]{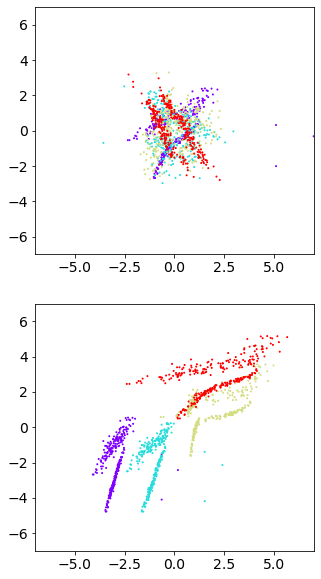}
         \caption{NB}
     \end{subfigure}
     \begin{subfigure}[t]{0.18\textwidth}
         \centering
         \includegraphics[width=\textwidth]{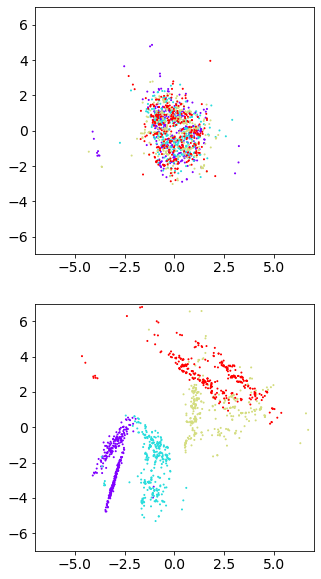}
         \caption{INB}
\end{subfigure}
     \begin{subfigure}[t]{0.18\textwidth}
         \centering
         \includegraphics[width=\textwidth]{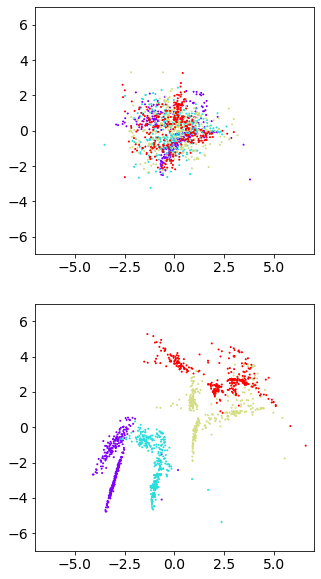}
         \caption{DD}
\end{subfigure}
        \caption{2D Random Pattern Data $(\nclass=4)$. The top row is the latent distribution found by class 1 data.
        The bottom row is the corresponding flipped distribution from it.}
        \label{fig:rand_k4}
\end{figure*}

\begin{figure*}[!t]
     \centering
     \newcommand{\mywidth}{0.47\textwidth}
     \begin{subfigure}{\mywidth}
         \centering
         \includegraphics[width=\textwidth]{figures/two_mnist/nb0.png}
         \caption{NB}
     \end{subfigure}
     \hspace{1em}
     \begin{subfigure}{\mywidth}
         \centering
         \includegraphics[width=\textwidth]{figures/two_mnist/swdnb0.png}
         \caption{INB}
     \end{subfigure}
     \begin{subfigure}{\mywidth}
         \centering
         \includegraphics[width=\textwidth]{figures/two_mnist/dd0.png}
         \caption{DD}
     \end{subfigure}
     \hspace{1em}
          \begin{subfigure}{\mywidth}
         \centering
         \includegraphics[width=\textwidth]{figures/two_mnist/alignflow0.png}
         \caption{AlignFlow}
     \end{subfigure}
        \caption{Expanded figure of MNIST ($\nclass=2$). The first row represents the original samples. The second row represents the latent representation. The third row represents the flipped samples.}
        \label{fig:mnist_k2_full}
\end{figure*}

\begin{figure*}[!t]
 \centering
     \newcommand{\mywidth}{0.47\textwidth}
     \begin{subfigure}{\mywidth}
         \centering
         \includegraphics[width=\textwidth]{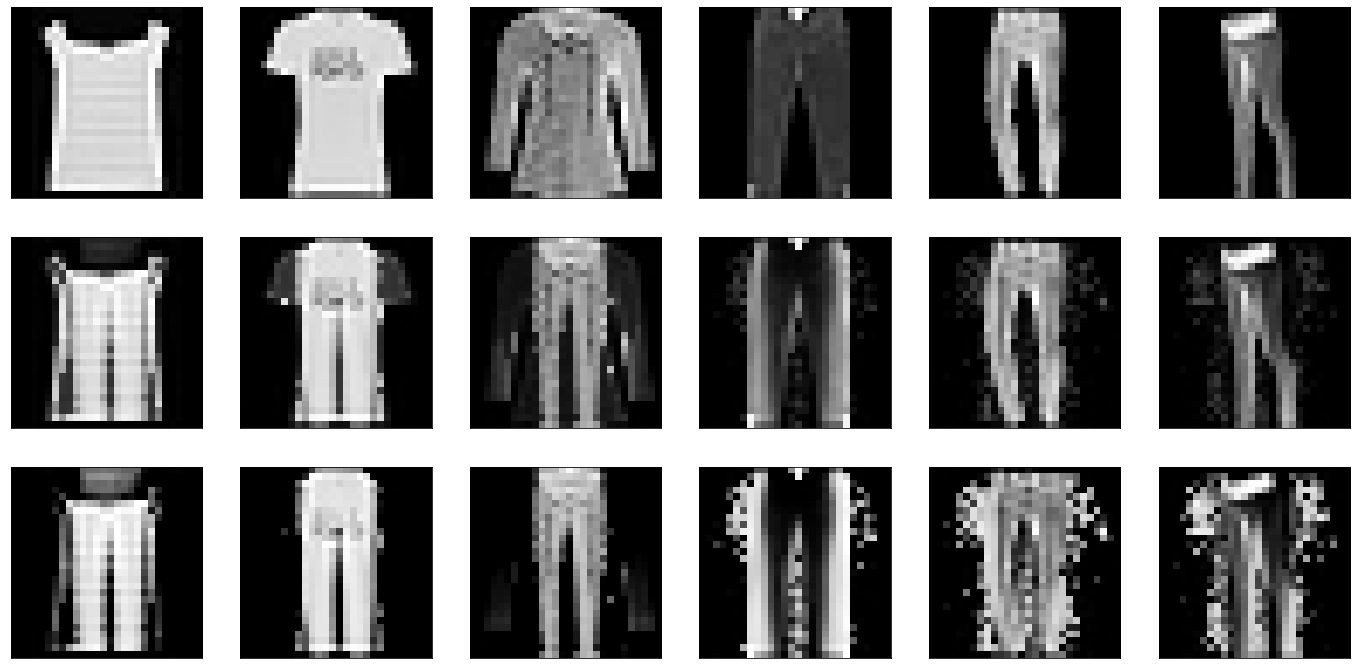}
         \caption{NB}
     \end{subfigure}
     \begin{subfigure}{\mywidth}
         \centering
         \includegraphics[width=\textwidth]{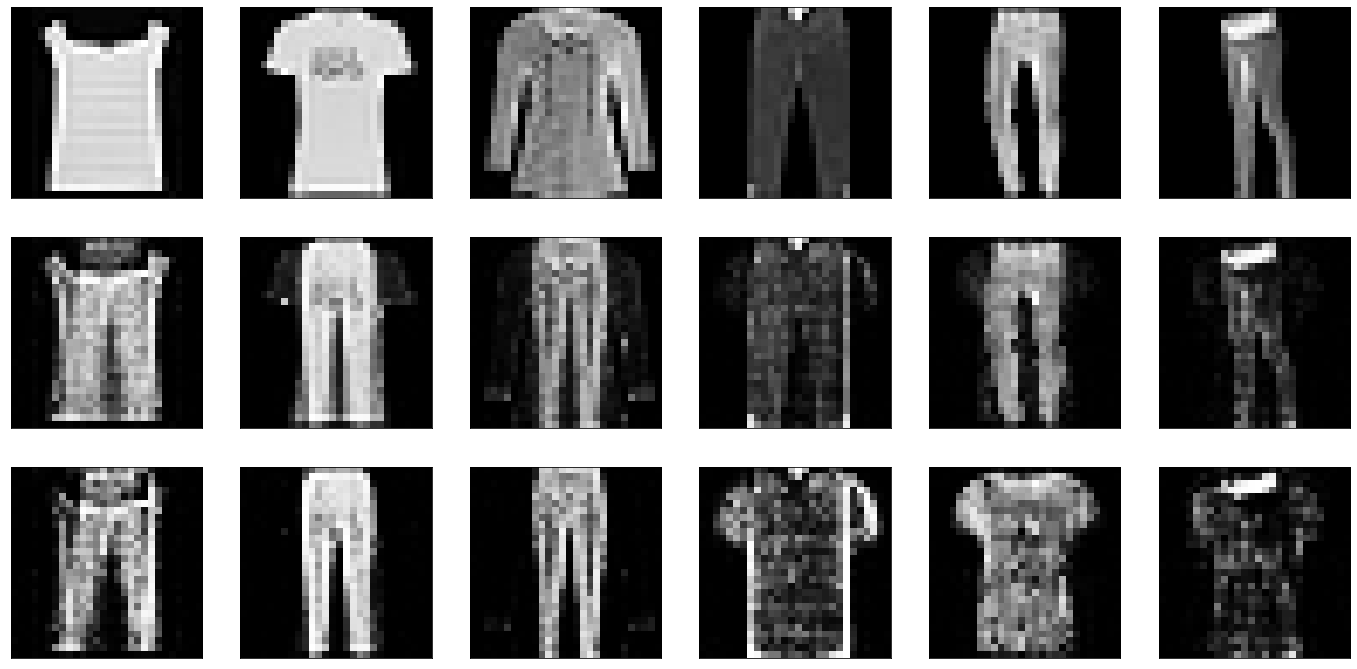}
         \caption{INB}
     \end{subfigure}
     \begin{subfigure}{\mywidth}
         \centering
         \includegraphics[width=\textwidth]{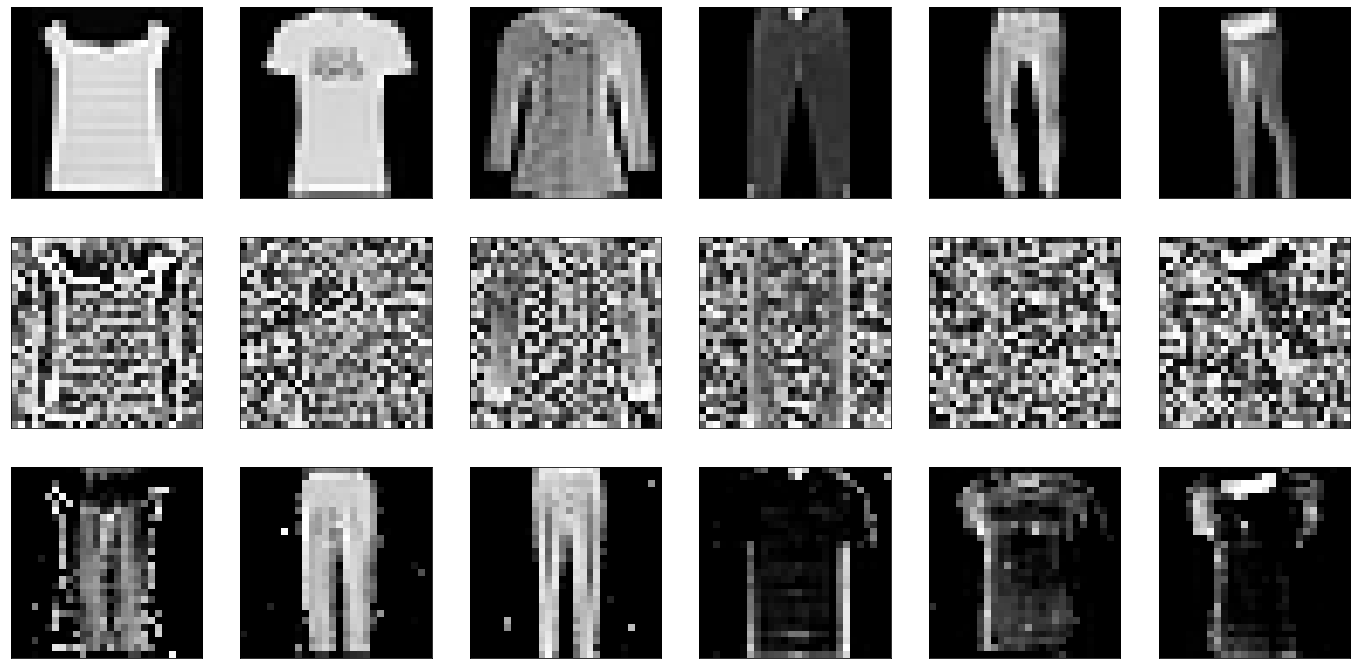}
         \caption{DD}
     \end{subfigure}
          \begin{subfigure}{\mywidth}
         \centering
         \includegraphics[width=\textwidth]{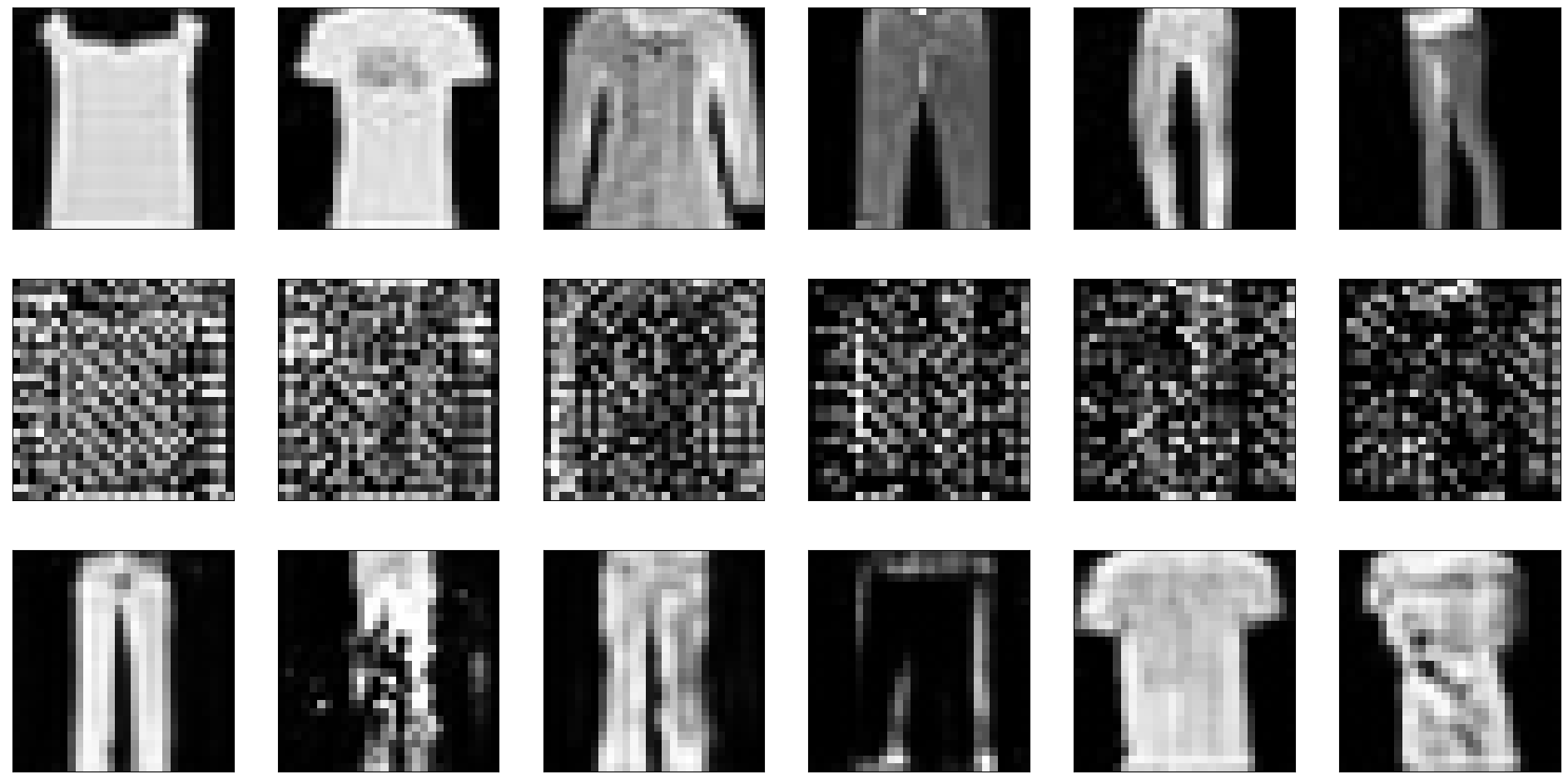}
         \caption{AlignFlow}
     \end{subfigure}
        \caption{Expanded figure of FashionMNIST ($\nclass=2$). The first row represents the original samples. The second row represents the latent representation. The third row represents the flipped samples.}
        \label{fig:fmnist_k2_full}
\end{figure*}

\begin{figure*}[!b]
     \centering
         \begin{subfigure}[t]{0.5\textwidth}
         \centering
         \includegraphics[width=\textwidth]{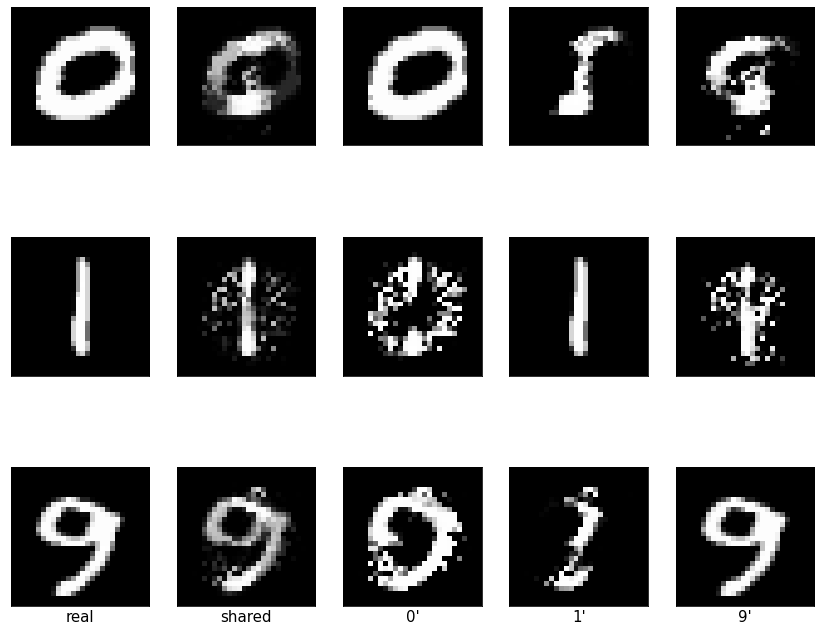}
         \caption{NB}
     \end{subfigure}
    \begin{subfigure}[t]{0.5\textwidth}
         \centering
         \includegraphics[width=\textwidth]{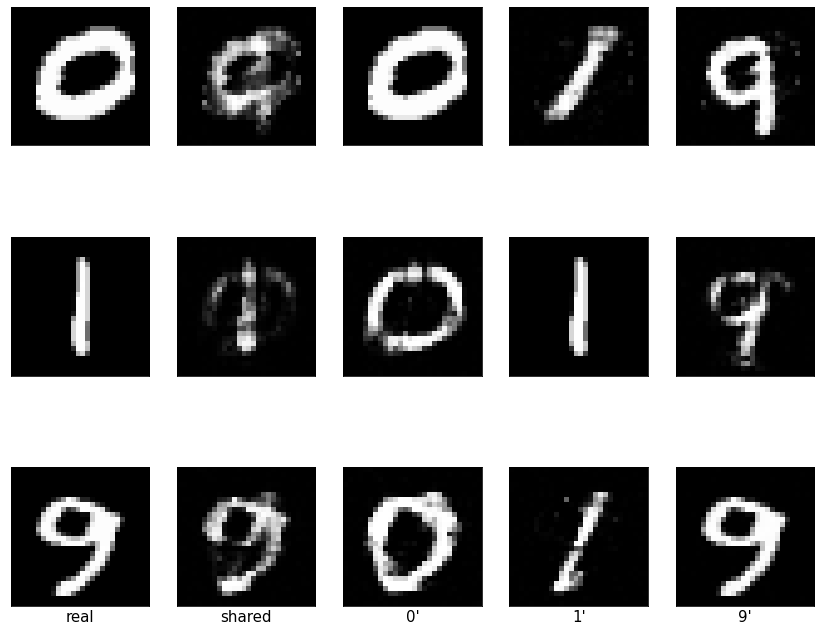}
         \caption{INB}
     \end{subfigure}
        \begin{subfigure}[t]{0.5\textwidth}
         \centering
         \includegraphics[width=\textwidth]{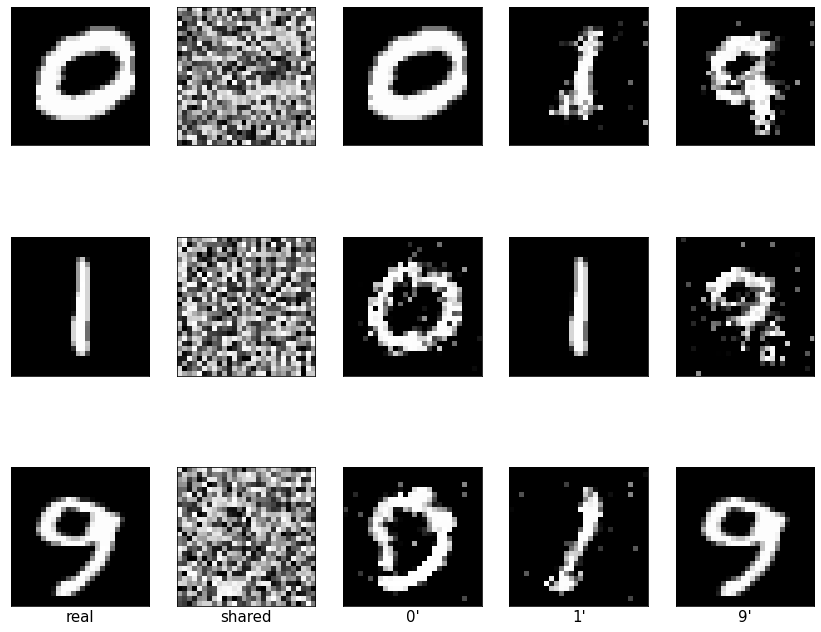}
         \caption{DD}
     \end{subfigure}
        \caption{Samples of MNIST ($\nclass=3$). 
        The first column shows the real samples and the second column shows their shared latent representations. The following columns show the mappings of the real samples to the distribution of the other digits e.g. all flipped samples in the first row are flipped from the real 0 in the first column.}
    \label{fig:mnist_k3}
\end{figure*}

\begin{figure*}[!b]
     \centering
         \begin{subfigure}[t]{0.5\textwidth}
         \centering
         \includegraphics[width=\textwidth]{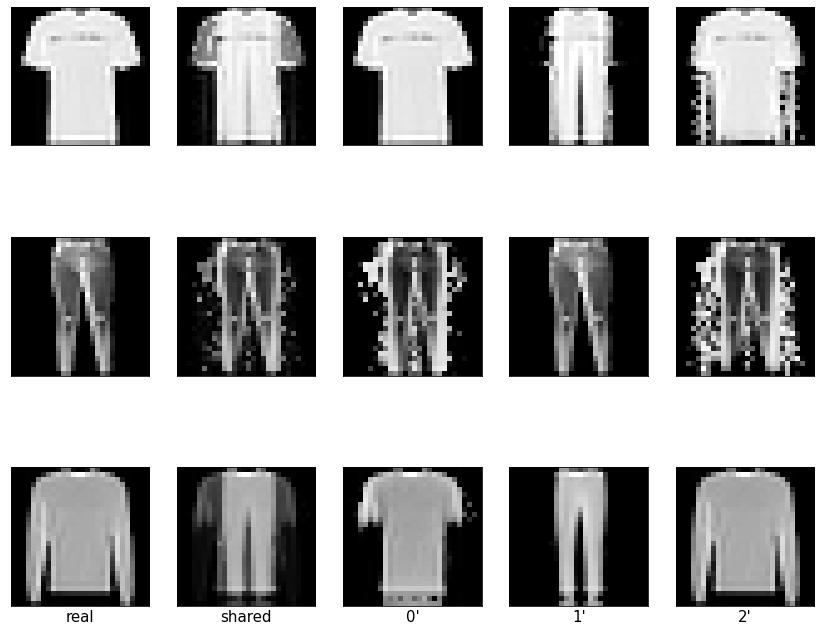}
         \caption{NB}
     \end{subfigure}
         \begin{subfigure}[t]{0.5\textwidth}
         \centering
         \includegraphics[width=\textwidth]{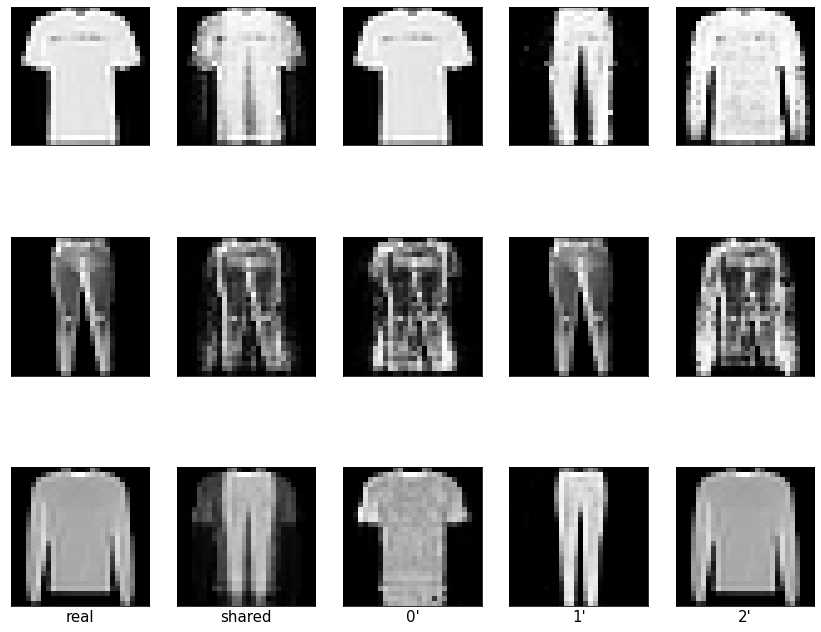}
         \caption{INB}
     \end{subfigure}
        \begin{subfigure}[t]{0.5\textwidth}
         \centering
     \includegraphics[width=\textwidth]{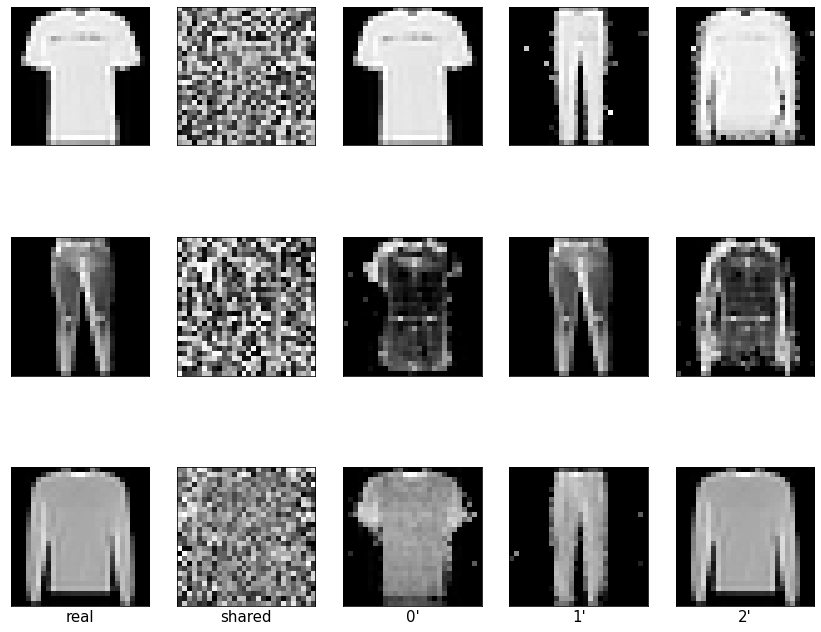}
         \caption{DD}
     \end{subfigure}
        \caption{Samples of FashionMNIST ($\nclass=3$).}
    \label{fig:fmnist_k3}
\end{figure*}
\clearpage

\begin{figure}[h!]\centering
\includegraphics[width=0.99\linewidth]{figures/ten_mnist/swdnb0.png}
        \caption{Multi-distribution ($\nclass=10$) results for MNIST with INB.
        The first column shows the real samples and the second column shows their shared latent representations. The following columns show the mappings of the real samples to the distribution of the other digits e.g. all flipped samples in the first row are flipped from the real 0 in the first column.}
        \label{fig:mnist-10-samples-inb}
\end{figure}

\begin{figure}[h!]\centering
\includegraphics[width=0.99\linewidth]{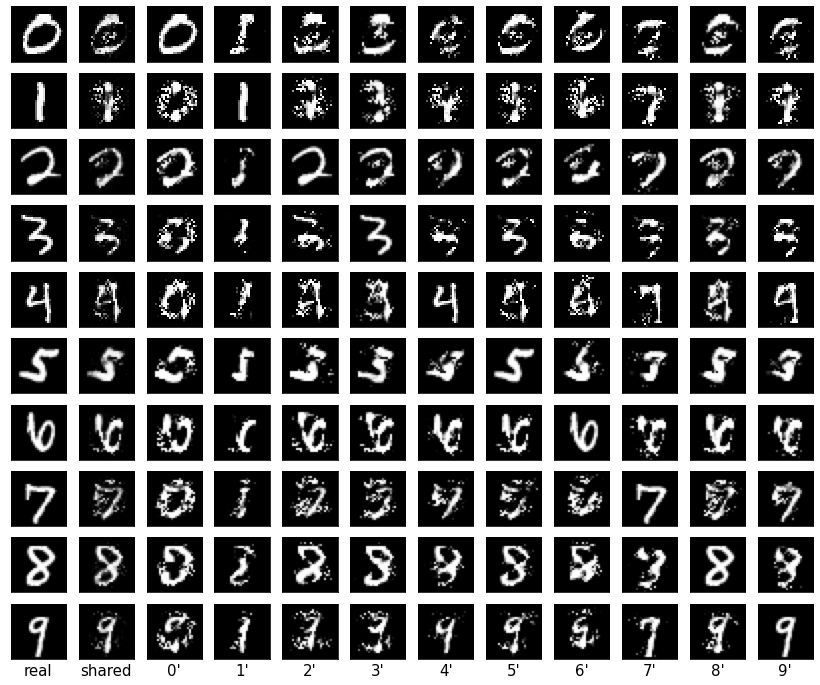}
        \caption{Multi-distribution ($\nclass=10$) results for MNIST with NB.}
        \label{fig:mnist-10-samples-nb}
\end{figure}

\begin{figure}[h!]\centering
\includegraphics[width=0.99\linewidth]{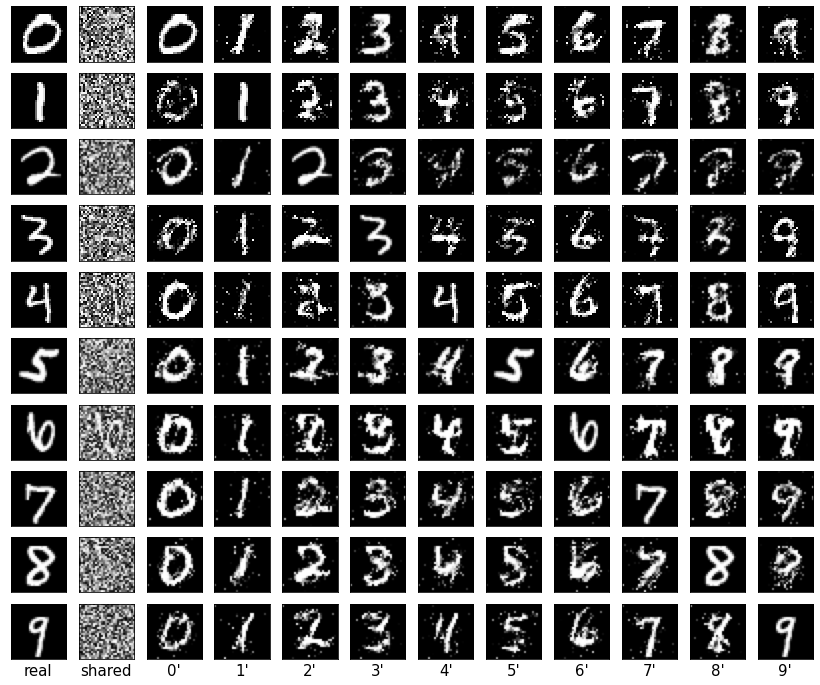}
        \caption{Multi-distribution ($\nclass=10$) results for MNIST with DD.}
        \label{fig:mnist-10-samples-dd}
\end{figure}

\begin{figure}[h!]\centering
\includegraphics[width=0.99\linewidth]{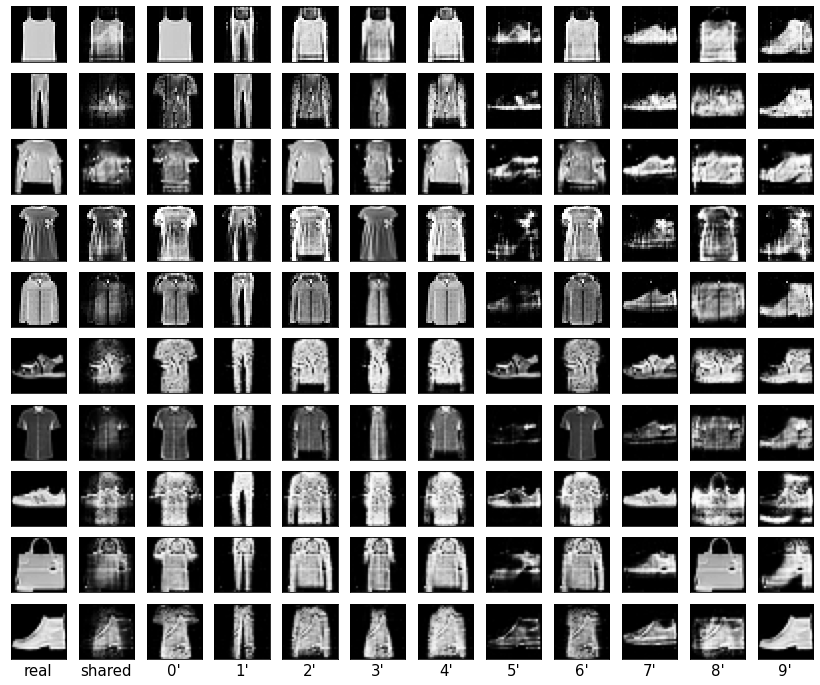}
        \caption{Multi-distribution ($\nclass=10$) results for FashionMNIST with INB.}
        \label{fig:fmnist-10-samples-inb}
\end{figure}

\begin{figure}[h!]\centering
\includegraphics[width=0.99\linewidth]{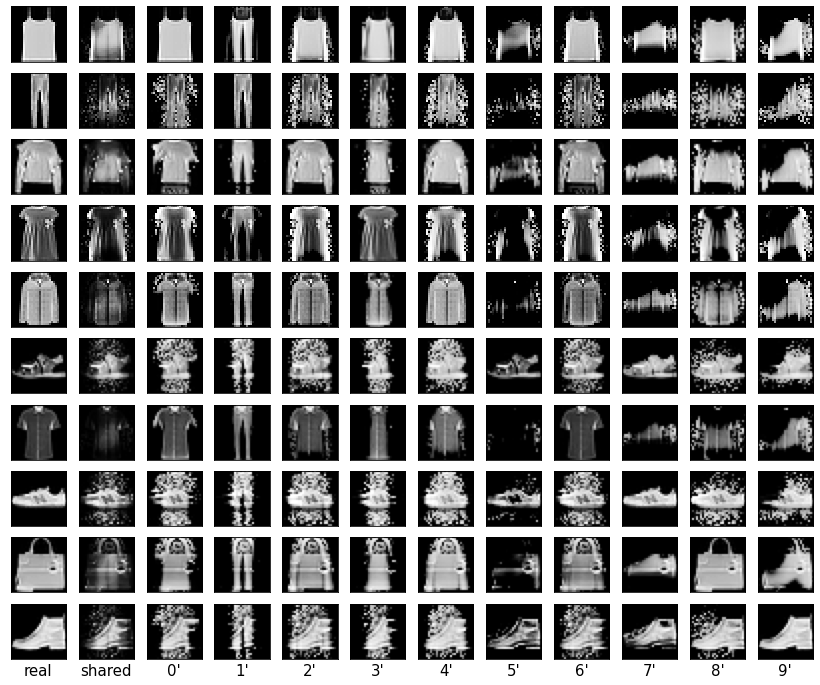}
        \caption{Multi-distribution ($\nclass=10$) results for FashionMNIST with NB.}
        \label{fig:fmnist-10-samples-nb}
\end{figure}

\begin{figure}[h!]\centering
\includegraphics[width=0.99\linewidth]{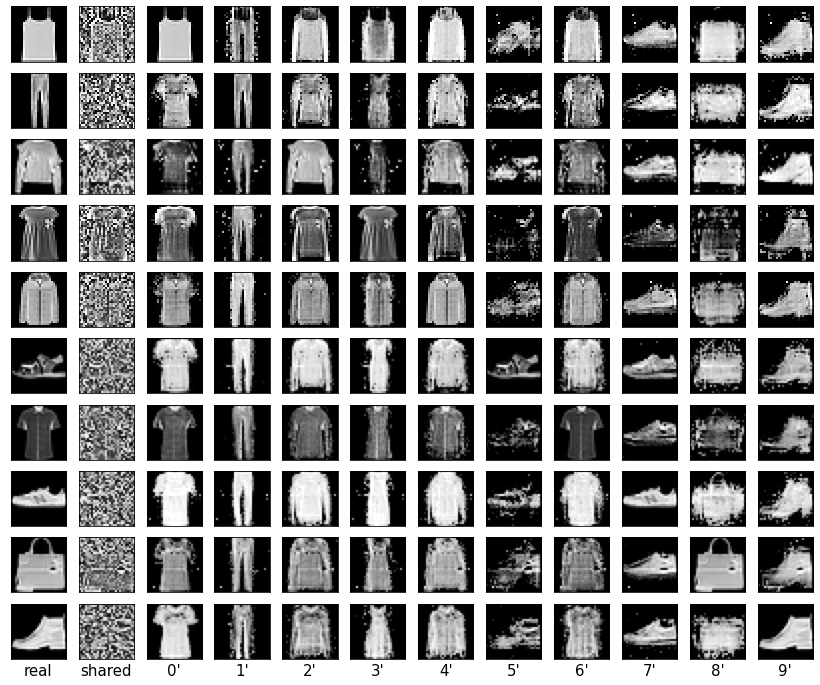}
        \caption{Multi-distribution ($\nclass=10$) results for FashionMNIST with DD.}
        \label{fig:fmnist-10-samples-dd}
\end{figure}

\clearpage

\begin{figure*}[h!]
     \centering
     \begin{subfigure}{0.85\textwidth}
     \centering
         \includegraphics[width=0.85\textwidth]{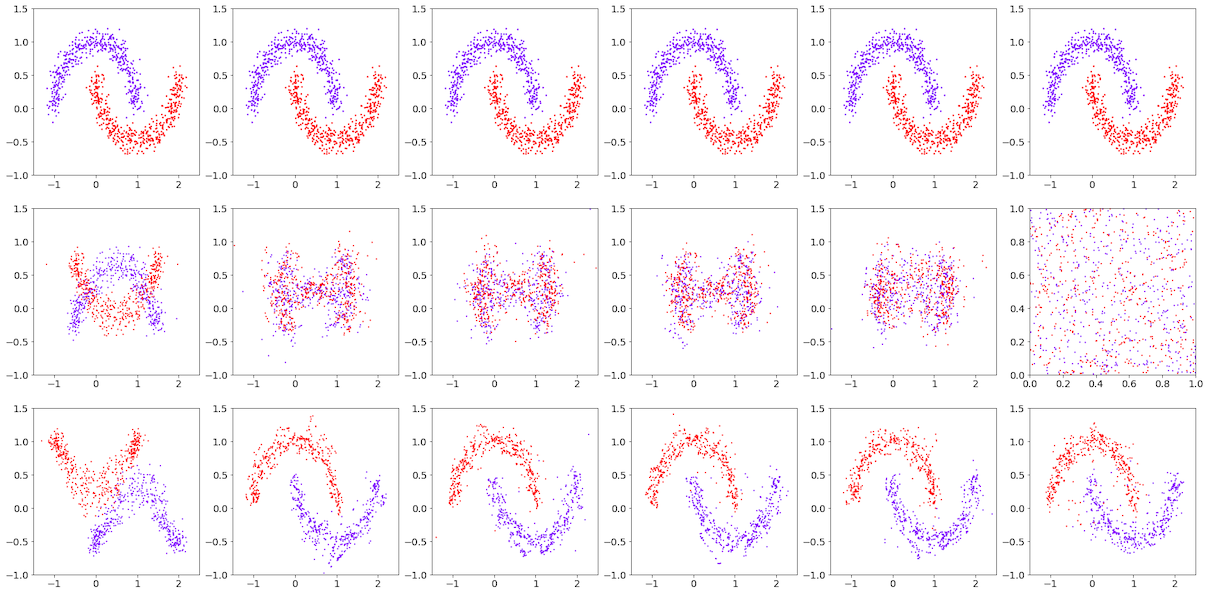}
\end{subfigure}
     \begin{subfigure}{0.85\textwidth}
         \centering
         \includegraphics[width=0.85\textwidth]{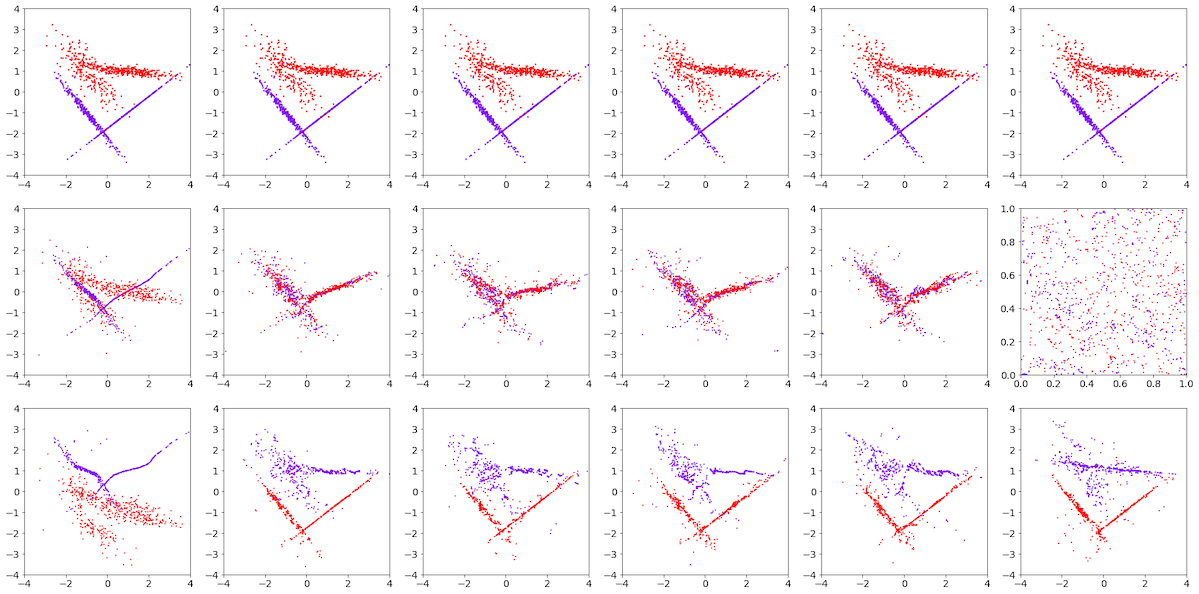}
\end{subfigure}
     \begin{subfigure}{0.85\textwidth}
         \centering
         \includegraphics[width=0.85\textwidth]{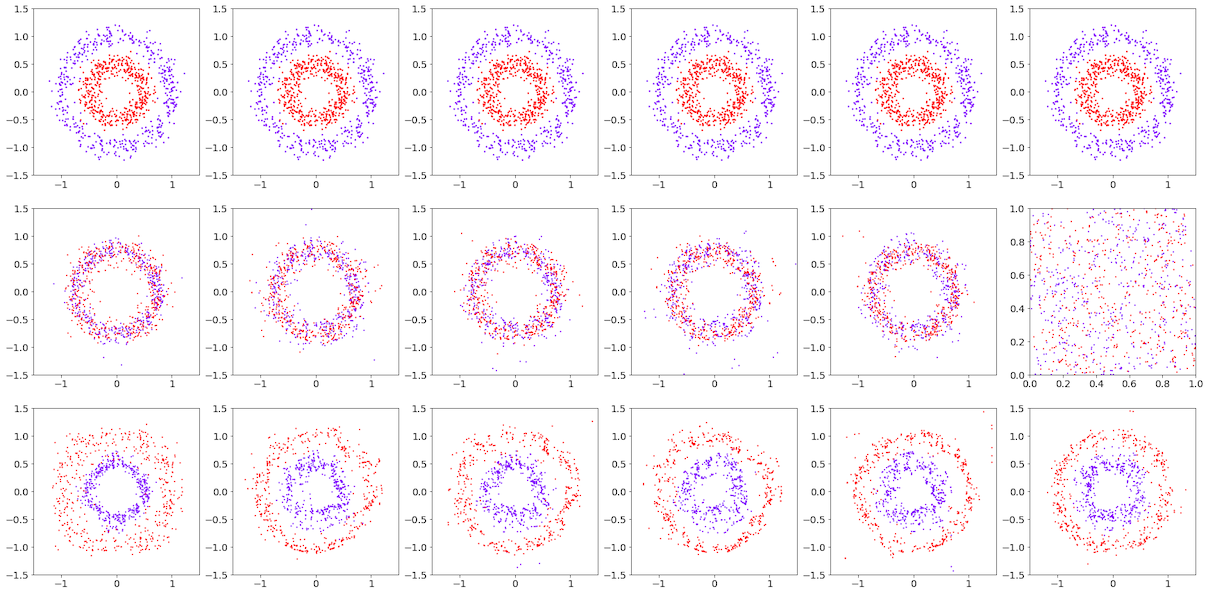}
\end{subfigure}
        \caption{Expanded figure of 2D Data ($\nclass=2$). In each sub figure, the first row represents the original data. The second row represents the latent distribution. The third row represents the flipped distribution. The columns from left to right represent the model: NB, INB, NB-INB, Rand-NB, NB-Rand-NB, DD.}
        \label{fig:2d_k2_full}
\end{figure*}
\clearpage

\begin{figure*}[!t]
     \centering
     
     \begin{subfigure}{0.15\textwidth}
\makebox[\textwidth][c]{\includegraphics[width=1\textwidth]{figures/multi_toy/random_ori.png}}
\label{fig:toy_gb}
     \end{subfigure}
     \begin{subfigure}{0.85\textwidth}
         \centering
         \includegraphics[width=0.85\textwidth]{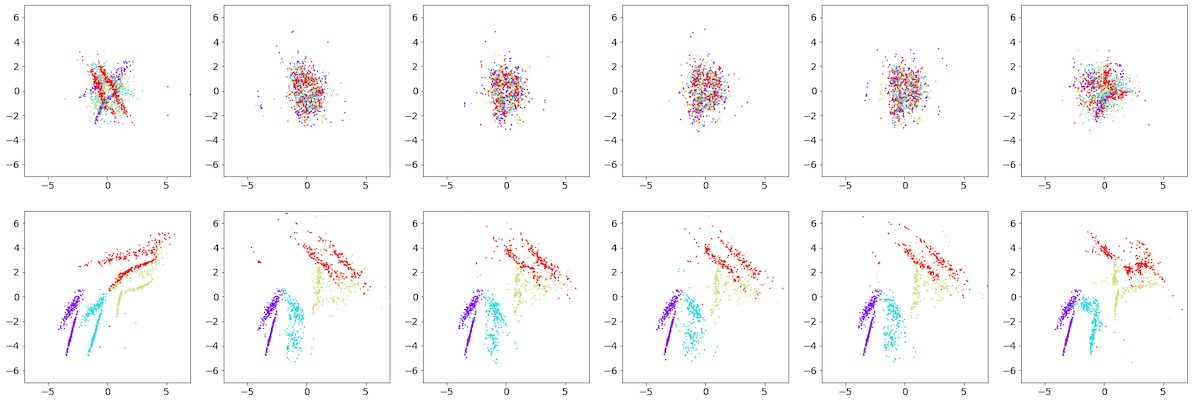}
\end{subfigure}
     \begin{subfigure}{0.85\textwidth}
         \centering
         \includegraphics[width=0.85\textwidth]{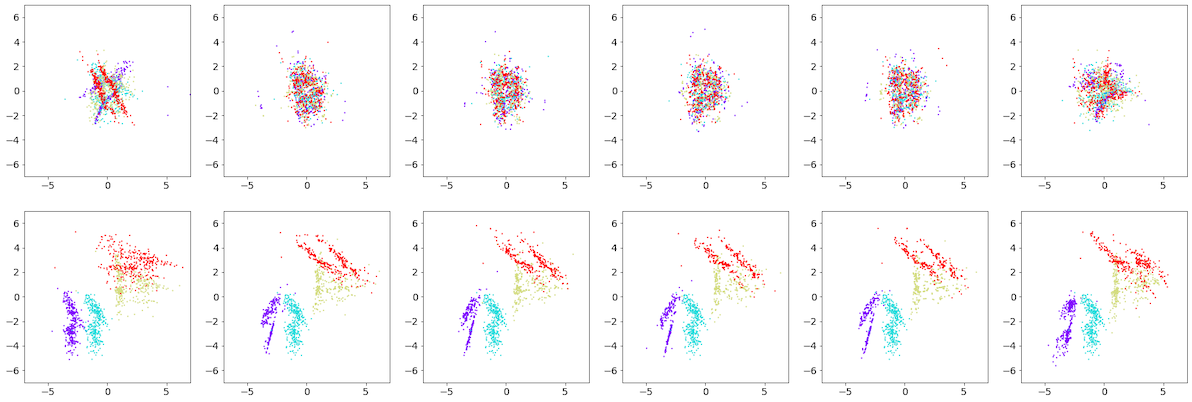}
\end{subfigure}
          \begin{subfigure}{0.85\textwidth}
         \centering
         \includegraphics[width=0.85\textwidth]{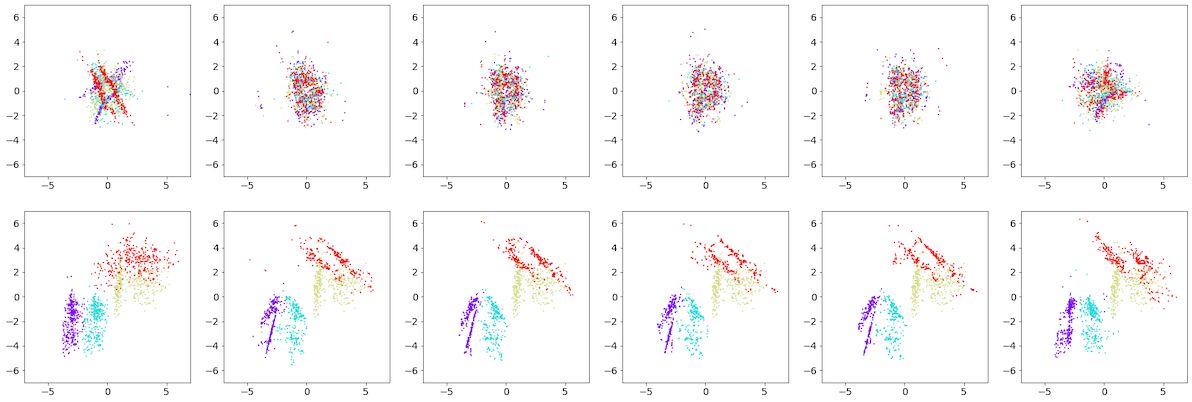}
\end{subfigure}
          \begin{subfigure}{0.85\textwidth}
         \centering
         \includegraphics[width=0.85\textwidth]{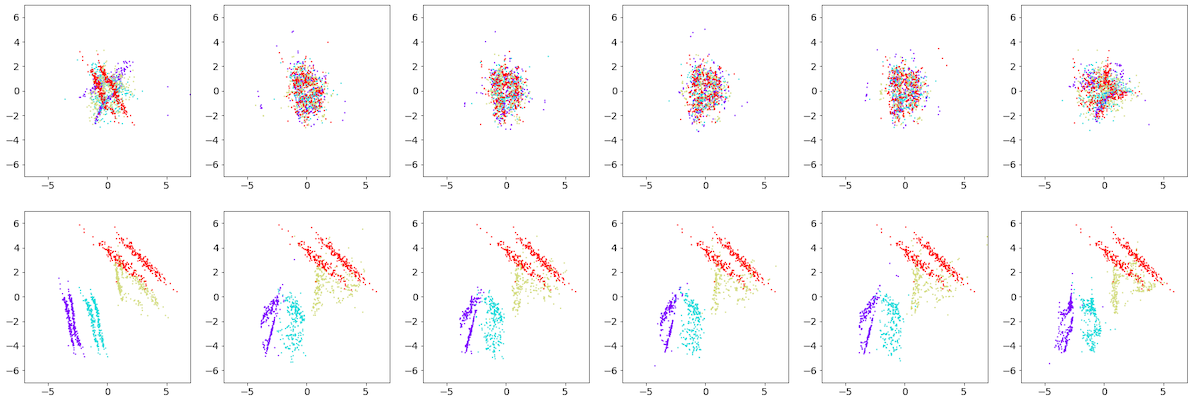}
\end{subfigure}
        \caption{Expanded figure of 2D Random Pattern ($\nclass=4$). 
        The columns from left to right represent the model: NB, INB, NB-INB, Rand-NB, NB-Rand-NB, DD.
        The top image is the original distribution.
        Each pair of rows represents the translation of samples from one class distribution to all other class distributions.
        We can translate every class distribution to every other class distribution since all functions are invertible.
        The pairs of rows are the results of translating from different source distributions, i.e., class 1 (purple), class 2 (turqoise), class 3 (yellow), and class 4 (red) distributions respectively.
        The top of each pair is the shared latent representation (the same across all rows) whereas the bottom row shows the generated data.
        Note that if the source and target distribution are the same, e.g., from class 1 to class 1, the output distribution will be exactly as in the original since our transformations are invertible.
        }
        \label{fig:random_k4_full}
\end{figure*}
\clearpage

\begin{figure*}[!t]
     \centering
     \begin{subfigure}{0.2\textwidth}
\includegraphics[width=1\textwidth]{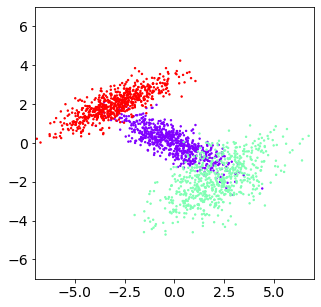}
\end{subfigure}
     \begin{subfigure}{\textwidth}
         \centering
         \includegraphics[width=0.95\textwidth]{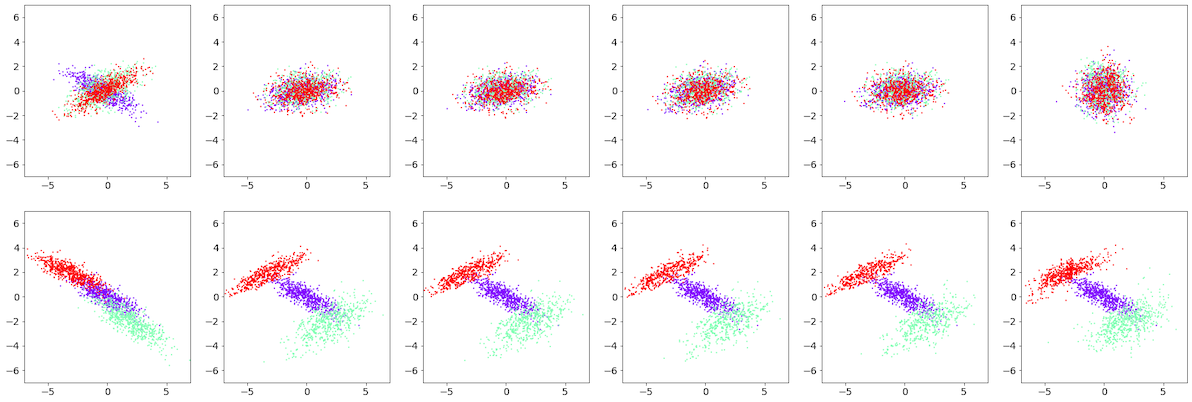}
\end{subfigure}
     \begin{subfigure}{\textwidth}
         \centering
         \includegraphics[width=0.95\textwidth]{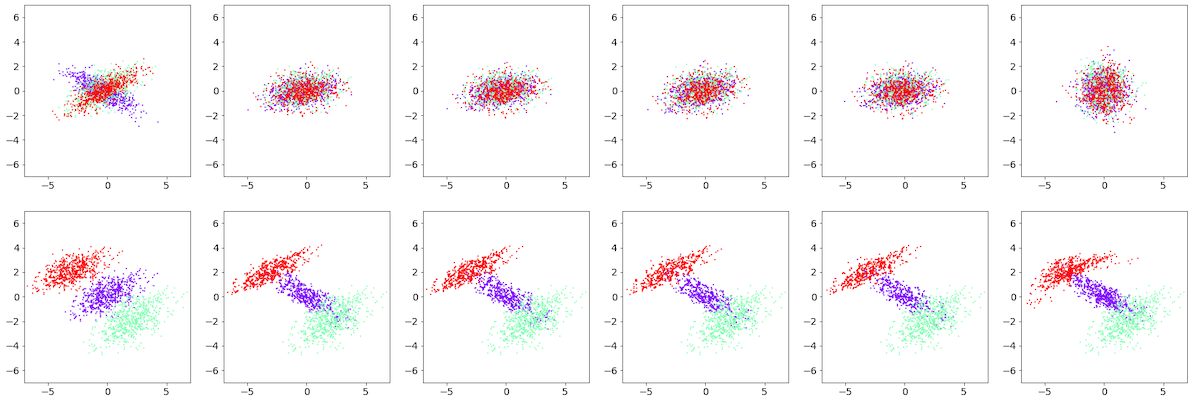}
\end{subfigure}
          \begin{subfigure}{\textwidth}
         \centering
         \includegraphics[width=0.95\textwidth]{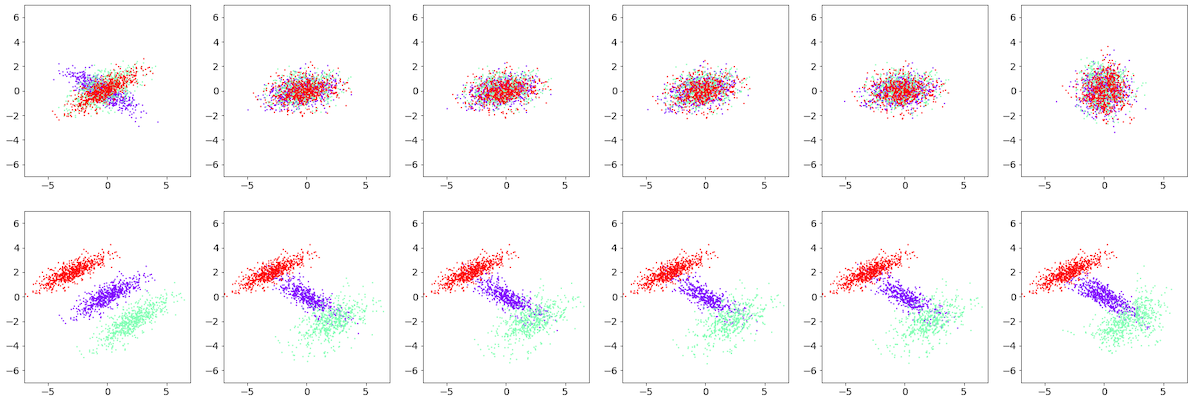}
\end{subfigure}
        \caption{Expanded figure of 2D Gaussian ($\nclass=3$).
        The columns from left to right represent the model: NB, INB, NB-INB, Rand-NB, NB-Rand-NB, DD.
        The top image is the original distribution. 
        See caption of \autoref{fig:random_k4_full} for explanation of each pair of rows.}
        \label{fig:gaussian_k3_full}
\end{figure*}

\end{document}